\documentclass[11pt]{article}
\usepackage{latexsym,amssymb,amsmath} % for \Box, \mathbb, split, etc.
\usepackage{path}
\usepackage{url}
\usepackage{verbatim}

\usepackage{natbib}
\usepackage{amsfonts}
\usepackage{amsfonts}
\usepackage{amsfonts}
\usepackage{array}
%----------------------------------------------added by xiaotong---------------------------------------------
\usepackage{mathrsfs}
\usepackage{amsfonts}
\usepackage{amsmath}
\usepackage{amssymb}
\usepackage{amsthm}
\usepackage{bm}
\usepackage{appendix}

\usepackage{graphicx} % more modern
\usepackage{subfigure}

\graphicspath{{figures/}}

% For algorithms
%\usepackage{algorithm}
%\usepackage{algorithmic}
\usepackage[lined,boxed,ruled, commentsnumbered]{algorithm2e}

\newtheorem{lemma}{Lemma}
\newtheorem{theorem}{Theorem}

\newtheorem{proposition}{Proposition}
\newtheorem{definition}{Definition}
\newtheorem{remark}{Remark}
\newtheorem{assumption}{Assumption}

\newcommand{\supp}{\text{supp}}

\newcommand{\argmin}{\mathop{\arg\min}}
\newcommand{\argmax}{\mathop{\arg\max}}

\newcommand{\Tr}{\text{Tr}}

\newcommand{\st}{\text{s.t. }}
\newcommand{\diag}{\text{diag}}
%------------------------------------------------------------------------------------------------------------

% horizontal margins: 1.0 + 6.5 + 1.0 = 8.5
\setlength{\oddsidemargin}{0.0in}
\setlength{\textwidth}{6.5in}
% vertical margins: 1.0 + 9.0 + 1.0 = 11.0
\setlength{\topmargin}{0.0in}
\setlength{\headheight}{12pt}
\setlength{\headsep}{13pt}
\setlength{\textheight}{625pt}
\setlength{\footskip}{24pt}

\makeatletter
\setlength{\arraycolsep}{2\p@} % make spaces around "=" in eqnarray smaller
\makeatother

% change equation, table, figure numbers to be counted inside a section:
\numberwithin{equation}{section}
\numberwithin{table}{section}
\numberwithin{figure}{section}

% begin of personal macros

\newcommand{\junk}[1]{{}}
\newcommand{\vect}{\text{vect} }
% set two lengths for the includegraphics commands used to import the plots:
\newlength{\fwtwo} \setlength{\fwtwo}{0.45\textwidth}
% end of personal macros

\DeclareGraphicsExtensions{.jpg}

\title{Gradient Hard Thresholding Pursuit for Sparsity-Constrained Optimization}
\author{
  Xiao-Tong Yuan$^{1,2}$,\ \  Ping Li$^{2,3}$, \ \ Tong Zhang$^{2}$\\\\
  1. Department of Statistical Science, Cornell University \\
  Ithaca, New York, 14853, USA \\
  \and
  2. Department of Statistics \& Biostatistics, Rutgers University \\
  Piscataway, New Jersey, 08854, USA \\
  \and
  3. Department of Computer Science, Rutgers University \\
  Piscataway, New Jersey, 08854, USA \\  \\
  E-mail: \{\texttt{xtyuan1980@gmail.com}, \texttt{pingli@stat.rutgers.edu}, \texttt{tzhang@stat.rutgers.edu}\}
  }
\date{}

\begin{document}

\maketitle

\begin{abstract}
Hard Thresholding Pursuit (HTP) is an iterative greedy selection
procedure for finding sparse solutions of underdetermined linear
systems. This method has been shown to have strong theoretical
guarantee and impressive numerical performance. In this paper, we
generalize HTP from compressive sensing to a generic problem setup
of sparsity-constrained convex optimization. The proposed algorithm
iterates between a standard gradient descent step and a hard
thresholding step with or without debiasing. We prove that our
method enjoys the strong guarantees analogous to HTP in terms of
rate of convergence and parameter estimation accuracy. Numerical
evidences show that our method is superior to the state-of-the-art
greedy selection methods in sparse logistic regression and sparse
precision matrix estimation tasks.
\end{abstract}

\subparagraph{Key words.} Sparsity, Greedy Selection, Hard
Thresholding Pursuit, Gradient Descent.

%\subparagraph{AMS subject classifications (2010).} Provide up to
%five subject classification codes here; search for the string
%``MSC'' at \url"www.ams.org".

\newpage

\section{Introduction}

In the past decade, high-dimensional data analysis has received
broad research interests in data mining and scientific discovery,
with many significant results obtained in theory, algorithm and
applications. The major driven force is the rapid development of
data collection technologies in many applications domains such as social
networks, natural language processing, bioinformatics and
computer vision. In these applications it is not unusual that data
samples are represented with millions or even billions of features
using which an underlying statistical learning model must be
fit. In many circumstances, however, the number of collected samples
is substantially smaller than the dimensionality of the feature,
implying that consistent estimators cannot be hoped for unless
additional assumptions are imposed on the model. One of the widely
acknowledged prior assumptions is that the data exhibit low-dimensional structure, which can
often be captured by imposing sparsity constraint on the model
parameter space. It is thus crucial to develop robust and
efficient computational procedures for solving, even just approximately,
these optimization problems with sparsity constraint.

In this paper, we focus on the following generic sparsity-constrained
optimization problem
\begin{equation}\label{prob:f_l0}
\min_{x \in \mathbb{R}^p} f(x), \quad \st \|x\|_0 \le k,
\end{equation}
where $f:\mathbb{R}^p \mapsto \mathbb{R}$ is a smooth convex cost
function. Among others, several examples falling into this model
include: (i) Sparsity-constrained linear regression
model~\citep{Tropp-2007} where the residual error is used to measure
data reconstruction error; (ii) Sparsity-constrained logistic
regression model~\citep{Bahmani-2013} where the sigmoid loss is used
to measure prediction error; (iii) Sparsity-constrained graphical
model learning~\citep{Jalali-NIPS-2011} where the likelihood of
samples drawn from an underlying probabilistic model is used to
measure data fidelity.

Unfortunately, due to the non-convex cardinality constraint, the
problem~\eqref{prob:f_l0} is generally NP-hard even for the
quadratic cost function~\citep{Natarajan-SJCom-1995}. Thus, one must
instead seek approximate solutions. In particular, the special case
of~\eqref{prob:f_l0} in least square regression models has gained
significant attention in the area of compressed
sensing~\citep{Donoho-TIT-2006}. A vast body of greedy selection algorithms for compressing sensing have
been proposed including matching pursuit~\citep{Mallat-1993},
orthogonal matching pursuit~\citep{Tropp-2007}, compressive sampling
matching pursuit~\citep{Needell-2009}, hard thresholding
pursuit~\citep{Foucart-2011}, iterative hard thresholding~\citep{Blumensath-2009} and subspace
pursuit~\citep{Dai-TIT-2009} to name a few. These algorithms successively select the position of nonzero
entries and estimate their values via exploring the residual error from the previous
iteration. Comparing to those first-order convex
optimization methods developed for $\ell_1$-regularized sparse
learning~\citep{Beck-2009,Agarwal-NIPS-2010}, these greedy selection algorithms often exhibit similar accuracy guarantees but more attractive computational efficiency.

The least square error used in compressive sensing, however, is not an appropriate measure of discrepancy in a variety of applications beyond signal processing. For example, in statistical machine learning the
log-likelihood function is commonly used in logistic regression
problems~\citep{Bishop-PRML-2006} and graphical models
learning~\citep{Jalali-NIPS-2011,Ravikumar-EJS-2011}. Thus, it is
desirable to investigate theory and algorithms applicable to a broader
class of sparsity-constrained learning problems as given
in~\eqref{prob:f_l0}. To this end, several forward selection
algorithms have been proposed to select the nonzero entries in a
sequential fashion~\citep{Kim-ICML-2004,Shai-Shalev-Shwartz-SIJO-2010,Yuan-2013,Jaggi-2011-phd}.
This category of methods date back to the Frank-Wolfe
method~\citep{Frank-Wolfe-1956}. The forward greedy selection method
has also been generalized to minimize a convex objective over the
linear hull of a collection of
atoms~\citep{Tewari-NIPS-2011,Yuan-2012}. To make the greedy
selection procedure more adaptive, \citet{Zhang-FoBa-2008} proposed
a forward-backward algorithm which takes backward steps adaptively
whenever beneficial. \citet{Jalali-NIPS-2011} have applied this
forward-backward selection method to learn the structure of a
sparse graphical model. More recently, \citet{Bahmani-2013} proposed
a gradient hard-thresholding method which generalizes the
compressive sampling matching pursuit method~\citep{Needell-2009}
from compressive sensing to the general sparsity-constrained
optimization problem. The hard-threshholding-type methods have
also been shown to be statistically and computationally efficient
for sparse principal component analysis~\citep{Yuan-JMLR-2013,Ma-2013}.

\subsection{Our Contribution}

In this paper, inspired by the success of Hard Thresholding Pursuit
(HTP)~\citep{Foucart-2011,Foucart-2012} in compressive sensing, we
propose the Gradient Hard Thresholding Pursuit (GraHTP) method to
encompass the sparse estimation problems arising from applications
with general nonlinear models. At each iteration, GraHTP performs
standard gradient descent followed by a hard thresholding operation
which first selects the top $k$ (in magnitude) entries of the
resultant vector and then (optionally) conducts debiasing on the
selected entries. We prove that under mild conditions GraHTP (with
or without debiasing) has strong theoretical guarantees analogous to
HTP in terms of convergence rate and parameter estimation accuracy.
We have applied GraHTP to the sparse logistic regression model and
the sparse precision matrix estimation model, verifying that the
guarantees of HTP are valid for these two models. Empirically we
demonstrate that GraHTP is comparable or superior to the
state-of-the-art greedy selection methods in these two sparse
learning models. To our knowledge, GraHTP is the first
gradient-descent-truncation-type method for sparsity constrained
nonlinear problems.

\subsection{Notation}

In the following, $x \in \mathbb{R}^p$ is a vector, $F$ is an index
set and $A$ is a matrix. The following notations will be used in the
text.
\begin{itemize}
\item $[x]_i$: the $i$th entry of vector $x$.
\item $x_F$: the restriction of $x$ to index set $F$, i.e., $[x_F]_{i}=[x]_i$ if $i \in F$, and $[x_F]_{i}=0$
otherwise.
\item $x_k$: the restriction of $x$ to the top $k$ (in modulus)
entries. We will simplify $x_{F_k}$ to $x_k$ without ambiguity in
the context.
\item $\|x\|=\sqrt{x^\top x}$: the Euclidean norm of $x$.
\item $\|x\|_1 =\sum_{i=1}^d |x_i|$: the $\ell_1$-norm of $x$.
\item $\|x\|_0$: the number of nonzero entries of $x$.
\item $\supp(x)$: the index set of nonzero entries of $x$.
\item $\supp(x,k)$: the index set of the top $k$ (in modulus) entries of $x$.
\item $[A]_{ij}$: the element on the $i$th row and $j$th column of  matrix $A$.
\item $\|A\|=\sup_{\|x\| \le 1} \|Ax\|$: the spectral norm of matrix $A$.
\item $A_{F\bullet}$ ($A_{\bullet F}$ ): the rows (columns) of matrix $A$ indexed in $F$.
\item $|A|_1 = \sum_{1 \le i \le p, 1 \le j \le q}|[A]_{ij}|$: the element-wise $\ell_1$-norm of $A$.
\item $\Tr(A)$: the trace (sum of diagonal elements) of a square matrix $A$.
\item $A^-$: the restriction of a square matrix $A$ on its off-diagonal entries
\item $\vect(A)$: (column wise) vectorization of a matrix $A$.
\end{itemize}

\subsection{Paper Organization}

This paper proceeds as follows: We present in~\S\ref{sect:GraHTP}
the GraHTP algorithm. The convergence guarantees of GraHTP are
provided in~\S\ref{sect:theory}. The specializations of GraHTP in
logistic regression and Gaussian graphical models learning are
investigated in~\S\ref{sect:applications}. Monte-Carlo simulations
and experimental results on real data are presented
in~\S\ref{sect:experiments}. We conclude this paper
in~\S\ref{sect:conclusion}.

\section{Gradient Hard Thresholding Pursuit}
\label{sect:GraHTP}

GraHTP is an iterative greedy selection procedure for approximately
optimizing the non-convex problem~\eqref{prob:f_l0}. A high level
summary of GraHTP is described in the top panel of
Algorithm~\ref{alg:GraHTP}. The procedure generates a sequence of
intermediate $k$-sparse vectors $x^{(0)}, x^{(1)}, \ldots$ from an
initial sparse approximation $x^{(0)}$ (typically $x^{(0)}=0$). At
the $t$-th iteration, the first step \textbf{S1}, $\tilde x^{(t)} = x^{(t-1)}
- \eta \nabla f(x^{(t-1)})$, computes the gradient descent at the
point $x^{(t-1)}$ with step-size $\eta$. Then in the second step,
\textbf{S2}, the $k$ coordinates of the vector $\tilde x^{(t)}$ that have the
largest magnitude are chosen as the support in which pursuing the
minimization will be most effective. In the third step, \textbf{S3}, we find
a vector with this support which minimizes the objective function,
which becomes $x^{(t)}$. This last step, often referred to as
\emph{debiasing}, has been shown to improve the performance in other
algorithms too. The iterations continue until the algorithm reaches
a terminating condition, e.g., on the change of the cost function or
the change of the estimated minimum from the previous iteration. A
natural criterion here is $F^{(t)} = F^{(t-1)}$ (see \textbf{S2} for the definition of $F^{(t)}$), since then
$x^{(\tau)} = x^{(t)}$ for all $\tau \ge t$, although there is no
guarantee that this should occur. It will be assumed throughout the
paper that the cardinality $k$ is known. In practice this quantity
may be regarded as a tuning parameter of the algorithm via, for example,
cross-validations.

In the standard form of GraHTP, the debiasing step \textbf{S3} requires to
minimize $f(x)$ over the support $F^{(t)}$. If this step is judged
too costly, we may consider instead a fast variant of GraHTP, where
the debiasing is replaced by a simple truncation operation $x^{(t)}
= \tilde x^{(t)}_k$. This leads to the Fast GraHTP (FGraHTP)
described in the bottom panel of Algorithm~\ref{alg:GraHTP}. It is
interesting to note that FGraHTP can be regarded as a projected
gradient descent procedure for optimizing the non-convex
problem~\eqref{prob:f_l0}. Its per-iteration computational overload
is almost identical to that of the standard gradient descent
procedure. While in this paper we only study the FGraHTP outlined
in Algorithm 1, we should mention that other fast variants of GraHTP can also be
considered. For instance, to reduce the computational cost of \textbf{S3},
we can take a restricted Newton step or a restricted gradient
descent step to calculate $x^{(t)}$.

We close this section by pointing out that, in the special case where
the squared error $f(x) = \frac{1}{2}\|y - Ax\|^2$ is the cost
function, GraHTP reduces to HTP~\citep{Foucart-2011}. Specifically,
the gradient descent step S1 reduces to $\tilde x^{(t)} =
x^{(t-1)}+\eta A^\top (y - Ax^{(t-1)})$ and the debiasing step S3
reduces to the orthogonal projection $x^{(t)} = \argmin\{ \|y-Ax\|,
\supp(x) \subseteq F^{(t)}\}$. In the meanwhile, FGraHTP reduces to
IHT~\citep{Blumensath-2009} in which the iteration is defined by
$x^{(t)} = (x^{(t-1)} + \eta A^\top (y-Ax^{(t-1)}))_k$.

\begin{algorithm} \label{alg:GraHTP}
\SetKwInOut{Input}{Input}\SetKwInOut{Output}{Output}\SetKw{Initialization}{Initialization:}

\Initialization{$x^{(0)}$ with $\|x^{(0)}\|_0\le k$ (typically
$x^{(0)}=0$), $t=1$.}

\Output{$x^{(t)}$.}

\Repeat{halting condition holds} {

(\textbf{S1}) Compute $\tilde x^{(t)} = x^{(t-1)} - \eta \nabla
f(x^{(t-1)})$;

(\textbf{S2}) Let $F^{(t)} = \supp(\tilde x^{(t)},k)$ be the indices of
$\tilde x^{(t)}$ with the largest $k$ absolute values;

(\textbf{S3}) Compute $x^{(t)} = \argmin\{ f(x), \supp(x) \subseteq
F^{(t)}\}$;

$t=t+1$;

}

-----------------------------------------------$\bigstar$ \emph{Fast GraHTP} $\bigstar$------------------------------------------------

\Repeat{halting condition holds} {

Compute $\tilde x^{(t)} = x^{(t-1)} - \eta \nabla f(x^{(t-1)})$;

Compute $x^{(t)} = \tilde x^{(t)}_k$ as the truncation of $ \tilde
x^{(t)}$ with top $k$ entries preserved;

$t=t+1$;

}
 \caption{Gradient Hard Thresholding Pursuit (GraHTP).}
\end{algorithm}

\section{Theoretical Analysis}
\label{sect:theory}

In this section, we analyze the theoretical properties of GraHTP and
FGraHTP. We first study the convergence of these two algorithms.
Next, we investigate their performances for the task of sparse recovery  in terms of
convergence rate and parameter estimation accuracy. We require the
following key technical condition under which the convergence and
parameter estimation accuracy of GraHTP/FGraHTP can be guaranteed.
To simplify the notation in the following analysis, we abbreviate
$\nabla_F f = (\nabla f)_F$ and $\nabla_s f = (\nabla f)_{s}$.

\begin{definition}[Condition $C(s,\zeta,\rho_s)$]\label{assump_1}
For any integer $s>0$, we say $f$ satisfies condition $C(s,\zeta,\rho_s)$ if for any index set $F$ with cardinality $|F|\le s$ and any $x,y$
with $\supp(x)\cup \supp(y)\subseteq F$, the following inequality
holds for some $\zeta>0$ and $0<\rho_s<1$:
\[
\|x - y - \zeta \nabla_F f(x) + \zeta \nabla_F f(y)\| \le
\rho_{s}\|x-y\|.
\]
\end{definition}
\begin{remark}
In the special case where $f(x)$ is least square loss function and
$\zeta=1$, Condition $C(s,\zeta,\rho_s)$ reduces to the well known
\emph{Restricted Isometry Property } (RIP) condition in compressive sensing.
\end{remark}
We may establish the connections between condition $C(s,\zeta,\rho_s)$ and the conditions of restricted strong convexity/smoothness which
are key to the analysis of several previous greedy selection
methods~~\citep{Zhang-FoBa-2008,Shai-Shalev-Shwartz-SIJO-2010,Yuan-2013,Bahmani-2013}.
\begin{definition}[Restricted Strong Convexity/Smoothness]
For any integer $s>0$, we say $f(x)$ is restricted $m_s$-strongly
convex and $M_s$-strongly smooth if there exist $\exists m_s, M_s >
0$ such that
\begin{equation}\label{inequat:strong_smooth_convex}
\frac{m_s}{2}\|x-y\|^2 \le f(x) - f(y) - \langle \nabla f(y),
x-y\rangle \le \frac{M_s}{2}\|x-y\|^2, \quad \forall\|x-y\|_0\le s.
\end{equation}
\end{definition}
The following lemma connects condition $C(s,\zeta,\rho_s)$ to the
restricted strong convexity/smoothness conditions.
\begin{lemma}\label{lemma:strong_smooth}
Assume that $f$ is a differentiable function.
\begin{itemize}
  \item[(a)] If $f$ satisfies condition $C(s,\zeta,\rho_s)$, then for all $\|x-y\|_0 \le s$ the following two inequalities hold:
 \begin{eqnarray}
  \frac{1-\rho_s}{\zeta}\|x-y\| &\le& \|\nabla_F f(x) - \nabla_F f(y)\|\le
\frac{1+\rho_s}{\zeta} \|x-y\| \nonumber \\
 f(x) &\le& f(y) + \langle \nabla f(y), x-y\rangle +
\frac{1+\rho_s}{2\zeta}\|x-y\|^2. \nonumber
\end{eqnarray}
  \item[(b)] If $f$ is $m_s$-strongly convex and $M_s$-strongly
  smooth, then $f$ satisfies condition $C(s,\zeta,\rho_s)$ with any
  \[
  \zeta < 2m_s/M^2_s , \quad \ \rho_s = \sqrt{1-2\zeta m_s +
  \zeta^2 M_s^2}.
  \]
\end{itemize}
\end{lemma}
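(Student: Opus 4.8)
The plan is to treat the two parts separately: (a) follows directly from the defining inequality of $C(s,\zeta,\rho_s)$ by elementary manipulations, while (b) reduces to the standard relationship between smoothness and Lipschitz continuity of the gradient on the relevant coordinate subspace.

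For part (a), fix $x,y$ supported on a common index set $F$ with $|F|\le s$, and abbreviate $u=x-y$ and $v=\nabla_F f(x)-\nabla_F f(y)$; note $u$ is supported on $F$. The condition reads $\|u-\zeta v\|\le\rho_s\|u\|$, so the first pair of inequalities follows from the two forms of the triangle inequality, $\big|\|\zeta v\|-\|u\|\big|\le\|u-\zeta v\|\le\rho_s\|u\|$, which give $(1-\rho_s)\|u\|\le\zeta\|v\|\le(1+\rho_s)\|u\|$; dividing by $\zeta$ yields the claim. For the second inequality I would write $f(x)-f(y)-\langle\nabla f(y),x-y\rangle=\int_0^1\langle\nabla f(y+\tau u)-\nabla f(y),u\rangle\,d\tau$. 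Since $u$ is supported on $F$, each integrand equals $\langle\nabla_F f(y+\tau u)-\nabla_F f(y),u\rangle$, which I bound by Cauchy--Schwarz and then by the already-established upper bound $\|\nabla_F f(y+\tau u)-\nabla_F f(y)\|\le\frac{1+\rho_s}{\zeta}\tau\|u\|$ (valid because $y+\tau u$ and $y$ are both supported on $F$). Integrating $\int_0^1\tau\,d\tau=\tfrac12$ produces the constant $\frac{1+\rho_s}{2\zeta}$.

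For part (b), the key observation is that restricting $f$ to the coordinate subspace indexed by $F$ gives a function that is convex and $M_s$-smooth in the ordinary (unrestricted) sense, because any two points supported on $F$ automatically satisfy $\|x-y\|_0\le s$. I would then use the Lipschitz bound $\|\nabla_F f(x)-\nabla_F f(y)\|\le M_s\|x-y\|$ together with the strong-convexity estimate $\langle x-y,\nabla_F f(x)-\nabla_F f(y)\rangle\ge m_s\|x-y\|^2$. Expanding $\|x-y-\zeta(\nabla_F f(x)-\nabla_F f(y))\|^2=\|x-y\|^2-2\zeta\langle x-y,\nabla_F f(x)-\nabla_F f(y)\rangle+\zeta^2\|\nabla_F f(x)-\nabla_F f(y)\|^2$ and substituting these two bounds gives $\le(1-2\zeta m_s+\zeta^2 M_s^2)\|x-y\|^2=\rho_s^2\|x-y\|^2$, which is exactly the condition. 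I would then check that $\zeta<2m_s/M_s^2$ forces $\rho_s<1$, while $m_s\le M_s$ keeps $1-2\zeta m_s+\zeta^2 M_s^2\ge0$, so $\rho_s$ is well defined.

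The main obstacle I anticipate is the gradient-Lipschitz step in (b): smoothness as stated is only the quadratic upper bound on the function values, and converting it into $\|\nabla_F f(x)-\nabla_F f(y)\|\le M_s\|x-y\|$ genuinely uses convexity. I would prove it via the auxiliary function $\phi(z)=f(z)-\langle\nabla_F f(x),z\rangle$ restricted to $F$, whose minimizer over the $F$-subspace is $x$; minimizing the smoothness upper bound of $\phi$ over that subspace yields $\phi(x)\le\phi(y)-\frac{1}{2M_s}\|\nabla_F f(y)-\nabla_F f(x)\|^2$, and adding the analogous inequality with $x$ and $y$ interchanged gives the co-coercivity estimate $\frac{1}{M_s}\|\nabla_F f(x)-\nabla_F f(y)\|^2\le\langle\nabla_F f(x)-\nabla_F f(y),x-y\rangle$, from which Cauchy--Schwarz delivers the Lipschitz bound. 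Everything else is bookkeeping with the triangle inequality and a one-dimensional integral.
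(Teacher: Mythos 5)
Your proposal is correct and follows essentially the same route as the paper: part (a) via the triangle inequality applied to the defining inequality of $C(s,\zeta,\rho_s)$ plus the integral representation $f(x)-f(y)-\langle\nabla f(y),x-y\rangle=\int_0^1\langle\nabla f(y+\tau(x-y))-\nabla f(y),x-y\rangle\,d\tau$, and part (b) via the monotonicity bound $\langle x-y,\nabla_F f(x)-\nabla_F f(y)\rangle\ge m_s\|x-y\|^2$, the Lipschitz bound $\|\nabla_F f(x)-\nabla_F f(y)\|\le M_s\|x-y\|$, and expansion of the square. The only difference is that where the paper invokes Theorem 2.1.5 of Nesterov (2004) for the gradient-Lipschitz step, you prove it in-line through the standard co-coercivity argument restricted to the coordinate subspace, which is a faithful expansion of that citation rather than a different method.
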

A proof of this lemma is provided in
Appendix~\ref{append:proof_lemma_strong_smooth}.
\begin{remark}
The Part(a) of Lemma~\ref{lemma:strong_smooth} indicates that if
condition $C(s,\zeta,\rho_s)$ holds, then $f$ is strongly smooth. The
Part(b) of Lemma~\ref{lemma:strong_smooth} shows that the strong
smoothness/convexity conditions imply condition $C(s,\zeta,\rho_s)$.
Therefore, condition $C(s,\zeta,\rho_s)$ is no stronger than the strong
smoothness/conveixy conditions.
\end{remark}

\subsection{Convergence}

We now analyze the convergence properties of GraHTP and FGraHTP.
First and foremost, we make a simple observation about GraHTP: since
there is only a finite number of subsets of $\{1,...,p\}$ of size
$k$, the sequence defined by GraHTP is eventually periodic. The
importance of this observation lies in the fact that, as soon as the
convergence of GraHTP is established, then we can certify that the
limit is exactly achieved after a finite number of iterations. We
establish in Theorem~\ref{thrm:convergence}  the convergence of GraHTP and FGraHTP under proper
conditions. A proof of this theorem is provided in
Appendix~\ref{append:proof_thrm_convergence}.

\begin{theorem}\label{thrm:convergence}
Assume that $f$ satisfies condition $C(2k, \zeta, \rho_{2k})$ and the step-size $\eta
< \zeta / (1+\rho_{2k})$. Then the sequence $\{x^{(t)}\}$ defined by
GraHTP converges in a finite number of iterations. Moreover, the
sequence $\{f(x^{(t)})\}$ defined by FGraHTP converges.
\end{theorem}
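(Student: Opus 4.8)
The plan is to reduce both claims to a single \emph{sufficient-decrease} estimate for one hard-thresholding step and then exploit finiteness. Writing $x = x^{(t-1)}$, $\tilde x = \tilde x^{(t)} = x - \eta\nabla f(x)$, and $d = \tilde x^{(t)}_k - x^{(t-1)}$, the first and central step is to prove that
\[
f(\tilde x^{(t)}_k) \le f(x^{(t-1)}) - \Big(\tfrac{1}{2\eta} - \tfrac{1+\rho_{2k}}{2\zeta}\Big)\,\|\tilde x^{(t)}_k - x^{(t-1)}\|^2,
\]
where the hypothesis $\eta < \zeta/(1+\rho_{2k})$ makes the bracketed constant strictly positive. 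This one inequality is the engine for everything else.

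I would derive it from two ingredients. First, a \emph{projection inequality}: since $\tilde x^{(t)}_k$ is the best $k$-term approximation of $\tilde x^{(t)}$ and $x^{(t-1)}$ is itself $k$-sparse, we have $\|\tilde x - \tilde x^{(t)}_k\|^2 \le \|\tilde x - x^{(t-1)}\|^2$; substituting $\tilde x = x - \eta\nabla f(x)$ and expanding the squares yields $\langle \nabla f(x), d\rangle \le -\tfrac{1}{2\eta}\|d\|^2$. Second, a \emph{restricted smoothness} bound: both $x^{(t-1)}$ and $\tilde x^{(t)}_k$ are $k$-sparse, so their difference is $2k$-sparse, and Lemma~\ref{lemma:strong_smooth}(a) gives $f(\tilde x^{(t)}_k) \le f(x^{(t-1)}) + \langle \nabla f(x), d\rangle + \tfrac{1+\rho_{2k}}{2\zeta}\|d\|^2$. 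Combining the two produces the displayed decrease. For GraHTP the debiasing step only helps: $\tilde x^{(t)}_k$ is feasible for the support $F^{(t)}$ and $x^{(t)}$ minimizes $f$ over that support, so $f(x^{(t)}) \le f(\tilde x^{(t)}_k)$ and the same decrease bound holds with $x^{(t)}$ on the left-hand side.

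For FGraHTP the decrease estimate immediately shows $\{f(x^{(t)})\}$ is non-increasing, so it remains only to bound it below. Here I would read condition $C(2k,\zeta,\rho_{2k})$ as the statement that on every coordinate subspace $\mathbb{R}^F$ with $|F|\le 2k$ the map $z \mapsto z - \zeta\nabla_F f(z)$ is a $\rho_{2k}$-contraction; by Banach's fixed-point theorem it has a unique fixed point, which by convexity of $f$ is the unique minimizer of $f$ over $\mathbb{R}^F$. Taking the smallest of these finitely many minimum values over all supports of size $k$ gives a finite lower bound for $f$ on all $k$-sparse vectors, hence for $\{f(x^{(t)})\}$; a bounded monotone sequence converges.

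For GraHTP's finite termination, each iterate $x^{(t)}$ (for $t\ge 1$) equals the \emph{unique} minimizer of $f$ over its support $F^{(t)}$, so the iterates lie in the finite set of support-minimizers, and since the update is deterministic the sequence is eventually periodic. On a period the non-increasing sequence $\{f(x^{(t)})\}$ must be constant, so equality holds in the decrease bound, forcing $\tilde x^{(t)}_k = x^{(t-1)}$; then $x^{(t-1)}$ is supported on $F^{(t)}$ and attains $\min_{\mathbb{R}^{F^{(t)}}} f$, and uniqueness of that minimizer gives $x^{(t)} = x^{(t-1)}$. Thus every period collapses to a fixed point and the sequence is eventually constant. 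The hard part is precisely this last collapse: one must check that equality in the descent inequality genuinely forces a single point, which hinges on the uniqueness of the debiased minimizer and therefore on exploiting condition $C$ as a contraction rather than merely as a restricted convexity/smoothness bound.
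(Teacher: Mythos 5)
Your proposal is correct, and its engine is exactly the paper's: the sufficient-decrease bound
$f(x^{(t)})-f(x^{(t-1)})\le -\bigl(\tfrac{1}{2\eta}-\tfrac{1+\rho_{2k}}{2\zeta}\bigr)\|\tilde x^{(t)}_k-x^{(t-1)}\|^2$
is precisely inequality~\eqref{inequat:f_t-f_t-1}, and you derive it the same way (best-$k$-term projection inequality giving $\langle\nabla f(x^{(t-1)}),\tilde x^{(t)}_k-x^{(t-1)}\rangle\le-\tfrac{1}{2\eta}\|\tilde x^{(t)}_k-x^{(t-1)}\|^2$, plus the restricted smoothness bound from Lemma~\ref{lemma:strong_smooth}(a), plus $f(x^{(t)})\le f(\tilde x^{(t)}_k)$ from debiasing). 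Where you genuinely go beyond the paper is in the finishing steps. For FGraHTP, the paper asserts that the nonincreasing sequence $\{f(x^{(t)})\}$ converges; you additionally prove it is bounded below by reading condition $C(2k,\zeta,\rho_{2k})$ as saying that $z\mapsto z-\zeta\nabla_F f(z)$ is a $\rho_{2k}$-contraction on each coordinate subspace $\mathbb{R}^F$, so Banach's fixed-point theorem together with convexity of $f$ yields a unique attained minimum on each of the finitely many supports. This closes a real, if small, gap: monotonicity alone permits divergence to $-\infty$. For GraHTP, the paper deduces $F^{(t)}=F^{(t-1)}$ from $\tilde x^{(t)}_k=x^{(t-1)}$ and then concludes $x^{(t)}=x^{(t-1)}$; you instead argue that $\tilde x^{(t)}_k=x^{(t-1)}$ forces $x^{(t-1)}$ to be the unique minimizer of $f$ over $\mathbb{R}^{F^{(t)}}$, hence equal to $x^{(t)}$. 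Both routes work, but yours makes explicit the uniqueness of restricted minimizers, which the paper's periodicity observation and its final argmin-identification step tacitly require. The one inference you should spell out when writing this up is the middle of that last chain: from $\tilde x^{(t)}_k=x^{(t-1)}$, the entries of $\tilde x^{(t)}$ on $F^{(t)}$ agree with those of $x^{(t-1)}$, so $\eta\nabla_{F^{(t)}}f(x^{(t-1)})=0$, and only then does convexity identify $x^{(t-1)}$ as the minimizer over $\mathbb{R}^{F^{(t)}}$.
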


\begin{remark}
Since $\rho_{2k}\in (0,1)$, we have that the convergence results in
Theorem~\ref{thrm:convergence} hold whenever the step-size $\eta <
\zeta /2$. If $f$ is $m_s$-strongly convex and $M_s$-strongly
smooth, then from Part(b) of Lemma~\ref{lemma:strong_smooth} we know
that Theorem~\ref{thrm:convergence} holds whenever the step-size
$\eta < m_s/M_s^2$.
\end{remark}

\subsection{Sparse Recovery Performance}

The following theorem is our main result on the parameter estimation
accuracy of GraHTP and FGraHTP when the target solution is sparse.

\begin{theorem}\label{thrm:recovery_GraHTP}
Let $\bar x$ be an arbitrary $\bar k$-sparse vector and $k \ge \bar
k$. Let $s=2k+\bar k$. If $f$ satisfies condition $C(s,\zeta,\rho_s)$ and $\eta <
\zeta$,
\begin{itemize}
  \item[(a)] if $\mu_1 =
\sqrt{2}(1-\eta/\zeta + (2-\eta/\zeta)\rho_s)/(1-\rho_s )<1$, then
at iteration $t$, GraHTP will recover an approximation $x^{(t)}$
satisfying
\begin{equation}\label{equat:thrm_gradient_case_0}
\|x^{(t)} - \bar x\| \leq \mu_1^t \|x^{(0)} - \bar x\| +
\frac{2\eta+\zeta}{(1-\mu_1)(1-\rho_s)}\|\nabla_k f(\bar x)\|.
\nonumber
\end{equation}
  \item[(b)] if $\mu_2 =
2(1-\eta/\zeta + (2-\eta/\zeta)\rho_s)<1$, then at iteration $t$,
FGraHTP will recover an approximation $x^{(t)}$ satisfying
\begin{equation}\label{equat:thrm_gradient_case_0}
\|x^{(t)} - \bar x\| \leq \mu_2^t \|x^{(0)} - \bar x\| +
\frac{2\eta}{1-\mu_2}\|\nabla_s f(\bar x)\|. \nonumber
\end{equation}
\end{itemize}
\end{theorem}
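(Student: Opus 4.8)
The plan is to reduce each part to a one-step contraction of the form $\|x^{(t)}-\bar x\| \le \mu\,\|x^{(t-1)}-\bar x\| + c\,\|\nabla_\ast f(\bar x)\|$ and then unroll it as a geometric series, which immediately produces the stated estimates with the factor $1/(1-\mu)$. Writing $e_t = \|x^{(t)}-\bar x\|$ and $S = \supp(\bar x)$, the argument rests on three ingredients: a gradient-descent contraction controlled by condition $C(s,\zeta,\rho_s)$, a hard-thresholding (support-selection) estimate, and --- only for GraHTP --- a debiasing estimate coming from the optimality of the restricted minimizer.

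First I would establish the gradient-descent contraction. Put $\Omega = F^{(t-1)}\cup F^{(t)}\cup S$, so that $|\Omega|\le 2k+\bar k = s$ and $\supp(x^{(t-1)})\cup\supp(\bar x)\subseteq\Omega$, and let $v := (x^{(t-1)}-\bar x) - \zeta\nabla_\Omega f(x^{(t-1)}) + \zeta\nabla_\Omega f(\bar x)$, which condition $C(s,\zeta,\rho_s)$ bounds by $\|v\|\le\rho_s e_{t-1}$. Rewriting $\tilde x^{(t)}-\bar x = (x^{(t-1)}-\bar x)-\eta\nabla f(x^{(t-1)})$ through $v$ and the gradient difference $\nabla f(x^{(t-1)})-\nabla f(\bar x)$, and restricting to any subset $\Omega'\subseteq\Omega$ (so that $\|v_{\Omega'}\|\le\|v\|$), the Lipschitz estimate $\|\nabla_F f(x)-\nabla_F f(y)\|\le\frac{1+\rho_s}{\zeta}\|x-y\|$ of Lemma~\ref{lemma:strong_smooth}(a) yields
\[
\|(\tilde x^{(t)}-\bar x)_{\Omega'}\| \le \theta\,e_{t-1} + \eta\,\|\nabla_{\Omega'} f(\bar x)\|,\qquad \theta := 1 - \tfrac{\eta}{\zeta} + \bigl(2-\tfrac{\eta}{\zeta}\bigr)\rho_s.
\]
Reassuringly, $\theta$ is exactly the bracket appearing in both $\mu_1$ and $\mu_2$, and the freedom to choose $\Omega'\subseteq\Omega$ is what later lets the gradient term in part (a) be kept at support size $2k$.

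For FGraHTP (part (b)) the only remaining step is the hard-thresholding estimate. Since $x^{(t)}=\tilde x^{(t)}_k$ is the best $k$-term approximation of $\tilde x^{(t)}$ and $\bar x$ is $k$-sparse (as $\bar k\le k$) with support in $\Omega$, I would compare $x^{(t)}$ against $\bar x$ as a competing $k$-sparse vector to get $\|x^{(t)}-\tilde x^{(t)}_\Omega\|\le\|\bar x-\tilde x^{(t)}_\Omega\|$ and then, by the triangle inequality, $e_t\le 2\,\|(\tilde x^{(t)}-\bar x)_\Omega\|$. Inserting the contraction with $\Omega'=\Omega$ and using $\|\nabla_\Omega f(\bar x)\|\le\|\nabla_s f(\bar x)\|$ gives $e_t\le 2\theta\,e_{t-1}+2\eta\,\|\nabla_s f(\bar x)\|$, i.e. $\mu_2=2\theta$, which unrolls to the claimed bound.

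Part (a) is where the real work lies, because of the debiasing. I would first prove a support-selection bound: since $F^{(t)}$ collects the $k$ largest entries of $\tilde x^{(t)}$ and $|S|\le\bar k\le k$, pairing the indices of $S\setminus F^{(t)}$ with those of $F^{(t)}\setminus S$ gives $\|\bar x_{\overline{F^{(t)}}}\|\le\sqrt2\,\|(\tilde x^{(t)}-\bar x)_{\Omega'}\|$ with $\Omega'=S\cup F^{(t)}$ of size $\le 2k$. The debiasing is handled through the optimality condition $\nabla_{F^{(t)}} f(x^{(t)})=0$: applying condition $C$ on $F^{(t)}\cup S$ to the pair $(x^{(t)},\bar x)$ and reading off the $F^{(t)}$-component (on which the gradient of $x^{(t)}$ vanishes) gives $\|(x^{(t)}-\bar x)_{F^{(t)}}\|\le\rho_s e_t+\zeta\,\|\nabla_k f(\bar x)\|$. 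The main obstacle --- and the step that pins down the constant --- is to combine this with the Pythagorean identity $e_t^2=\|(x^{(t)}-\bar x)_{F^{(t)}}\|^2+\|\bar x_{\overline{F^{(t)}}}\|^2$ \emph{without} losing a spurious factor of two: substituting the debiasing bound produces a quadratic inequality in $e_t$, and solving it (rather than applying the triangle inequality) yields
\[
e_t \le \frac{\sqrt2}{1-\rho_s}\,\|(\tilde x^{(t)}-\bar x)_{\Omega'}\| + \frac{\zeta}{1-\rho_s}\,\|\nabla_k f(\bar x)\|.
\]
Finally I would insert the contraction on the small set $\Omega'$, using $|\Omega'|\le 2k$ to bound $\|\nabla_{\Omega'} f(\bar x)\|\le\sqrt2\,\|\nabla_k f(\bar x)\|$; this produces exactly $\mu_1=\sqrt2\,\theta/(1-\rho_s)$ and the noise coefficient $(2\eta+\zeta)/(1-\rho_s)$, after which unrolling the recursion completes the proof. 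I expect the delicate points to be the orthogonal split behind the clean debiasing estimate and the bookkeeping that keeps the gradient term at support size $2k$, so that $\nabla_k f(\bar x)$ rather than $\nabla_s f(\bar x)$ appears in part (a).
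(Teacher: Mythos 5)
Your proposal is correct and follows essentially the same route as the paper's proof: a condition-$C(s,\zeta,\rho_s)$ contraction for the gradient step obtained by splitting $\eta\nabla_F$ as $\zeta\nabla_F - (\zeta-\eta)\nabla_F$ (yielding exactly $\theta = 1-\eta/\zeta+(2-\eta/\zeta)\rho_s$), a top-$k$ thresholding bound with the $\sqrt{2}$ symmetric-difference factor, the debiasing estimate from $\nabla_{F^{(t)}} f(x^{(t)})=0$, and geometric unrolling of the resulting recursion $e_t \le \mu_1 e_{t-1} + (2\eta+\zeta)\|\nabla_k f(\bar x)\|/(1-\rho_s)$ (and $e_t \le 2\theta e_{t-1}+2\eta\|\nabla_s f(\bar x)\|$ for FGraHTP). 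The only cosmetic deviation is that you combine the $F^{(t)}$-component and complement bounds via the Pythagorean identity and a quadratic inequality, whereas the paper just applies the triangle inequality and rearranges; both yield the same constants, so there is no "spurious factor of two" to avoid in the first place.
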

A proof of this theorem is provided in
Appendix~\ref{append:proof_thrm_recovery_grahtp}. Note that we did
not make any attempt to optimize the constants in
Theorem~\ref{thrm:recovery_GraHTP}, which are relatively loose.
In the discussion, we  ignore the constants and
focus on the main message Theorem~\ref{thrm:recovery_GraHTP}
conveys. The part (a) of Theorem~\ref{thrm:recovery_GraHTP}
indicates that under proper conditions, the estimation error of
GraHTP is determined by the multiple of $\|\nabla_k f(\bar x)\|$,
and the rate of convergence before reaching this error level is
geometric. Particularly, if the sparse vector $\bar x$ is
sufficiently close to an unconstrained minimum of $f$ then the
estimation error floor is negligible because $\nabla_k f(\bar x)$
has small magnitude. In the ideal case where $\nabla f(\bar x)=0$
(i.e., the sparse vector $\bar x$ is an unconstrained minimum of
$f$), this result guarantees that we can recover $\bar x$ to
arbitrary precision. In this case, if we further assume that $\eta$
satisfies the conditions in Theorem~\ref{thrm:convergence}, then
exact recovery can be achieved in a \emph{finite} number of iterations. The part
(b) of Theorem~\ref{thrm:recovery_GraHTP} shows that FGraHTP enjoys
a similar geometric rate of convergence and the estimation error is
determined by the multiple of $\|\nabla_s f(\bar x)\|$ with
$s=2k+\bar k$.

The shrinkage rates $\mu_1 < 1$ (see Part (a)) and $\mu_2<1$ (see
Part (b)) respectively control the convergence rate of GraHTP and
FGraHTP. For GraHTP, the condition $\mu_1<1$ implies
\begin{equation}\label{inequat:eta}
\eta > \frac{((2\sqrt{2}+1)\rho_s + \sqrt{2}-1)\zeta}{\sqrt{2} +
\sqrt{2}\rho_s}.
\end{equation}
By combining this condition with $\eta < \zeta$, we can see that $\rho_s
< 1/(\sqrt{2}+1)$ is a necessary condition to guarantee $\mu_1<1$.
On the other side, if $\rho_s < 1/(\sqrt{2}+1)$, then we can always
find a step-size $\eta<\zeta$ satisfying~\eqref{inequat:eta} such
that $\mu_1<1$. This condition of $\rho_s$ is analogous to
the RIP condition for estimation from noisy measurements in
compressive sensing~\citep{Candes-CPAM-2006,Needell-2009,Foucart-2011}. Indeed, in this setup our GraHTP algorithm reduces to HTP which requires weaker RIP condition than prior compressive sensing algorithms. The guarantees of GraHTP and HTP are almost identical, although we did not make any attempt to optimize the RIP sufficient constants, which are $1/(\sqrt{2}+1)$ (for GraHTP) versus $1/\sqrt{3}$ (for HTP). We would like to emphasize that the condition $\rho_s<1/(\sqrt{2}+1)$ derived for GraHTP also holds in fairly general setups beyond compressive sensing. For FGraHTP we have similar discussions.

For the
general sparsity-constrained optimization problem, we note that a
similar estimation error bound has been established for the GraSP
(Gradient Support Pursuit) method~\citep{Bahmani-2013} which is another hard-thresholding-type method. At time stamp $t$, GraSP first conducts debiasing over the union of the top $k$ entries of $x^{(t-1)}$ and the top $2k$ entries of $\nabla f(x^{(t-1)})$, then it selects the top $k$ entries of the resultant vector and updates their values via debiasing, which becomes $x^{(t)}$. Our GraHTP is connected to GraSP in the sense that the $k$ largest absolute elements after the gradient descent step (see \textbf{S1} and \textbf{S2} of Algorithm~\ref{alg:GraHTP}) will come from some combination of the largest elements in $x^{(t-1)}$ and the largest elements in the gradient $\nabla f(x^{(t-1)})$. Although the convergence rate are of the same order, the per-iteration cost of GraHTP is cheaper than GraSP. Indeed, at each iteration, GraSP needs to minimize the objective over a support of size $3k$ while that size for GraHTP is $k$. FGraHTP is even cheaper for iteration as it does not need any debiasing operation. We will compare the actual numerical performances of these methods in our empirical study.

\section{Applications}
\label{sect:applications}

In this section, we will specialize GraHTP/FGraHTP to two popular
statistical learning models: the sparse logistic regression (in
\S\ref{ssect:logistic_model}) and the sparse precision matrix
estimation (in \S\ref{ssect:ggm}).

\subsection{Sparsity-Constrained $\ell_2$-Regularized Logistic Regression}
\label{ssect:logistic_model}

Logistic regression is one of the most popular models in
statistics and machine
learning~\citep{Bishop-PRML-2006}. In this model
the relation between the random feature vector $u \in \mathbb{R}^p$
and its associated random binary label $v \in \{-1,+1\}$ is
determined by the conditional probability
\begin{equation}\label{equat:logistic}
\mathbb{P}(v|u; \bar w) = \frac{\exp (2v \bar w^\top u)}{1+\exp (2v
\bar w^\top u)},
\end{equation}
where $\bar w \in \mathbb{R}^p$ denotes a parameter vector. Given a
set of $n$ independently drawn data samples
$\{(u^{(i)},v^{(i)})\}_{i=1}^n$, logistic regression learns the
parameters $w $ so as to minimize the logistic log-likelihood given
by
\[
l(w): = -\frac{1}{n}\log \prod_i \mathbb{P}( u^{(i)} \mid v^{(i)};
w) = \frac{1}{n}\sum_{i=1}^n \log(1+\exp(-2v^{(i)} w^\top u^{(i)})).
\]
It is well-known that $l(w)$ is convex. Unfortunately, in high-dimensional
setting, i.e., $n<p$, the problem can be underdetermined and thus its minimum is not unique. A conventional way to handle this issue is to impose
$\ell_2$-regularization to the logistic loss to avoid singularity. The $\ell_2$-penalty, however, does not promote sparse
solutions which are often desirable in high-dimensional learning tasks. The
sparsity-constrained $\ell_2$-regularized logistic regression is
then given by:
\begin{equation}\label{prob:logisic}
\min_w f(w) = l(w) + \frac{\lambda}{2} \|w\|^2 , \quad \text{subject
to } \|w\|_0 \le k,
\end{equation}
where $\lambda>0$ is the regularization strength parameter. Obviously
$f(w)$ is $\lambda$-strongly convex and hence it has a unique minimum.
The cardinality constraint enforces the solution to be sparse.
%It has been shown in~\citep[Corollary 1]{Bahmani-2013} that with
%overwhelming probability $f(w)$ is strong strongly smooth and
%strongly convex. Thus, from
%Remark~\ref{remmark:strong_smooth_convex} we know that $f(w)$
%satisfies Assumption~\ref{assump_1} with overwhelming probability.
%The bounds of $\|\nabla_k f(w)\|_2$ for the $\ell_2$-regularized
%logistic loss have also been analyzed in~\citep{Bahmani-2013}.

\subsubsection{Verifying Condition $C(s,\zeta,\rho_s)$}

Let $U=[u^{(1)},...,u^{(n)}]\in\mathbb{R}^{p\times n}$ be the design matrix and $\sigma(z)=1/(1+\exp(-z))$ be the sigmoid function. In the case of $\ell_2$-regularized logistic loss considered in this
section we have
\[
\nabla f(w) = U a(w)/n + \lambda w,
\]
where the vector $a(w)\in \mathbb{R}^n$ is given by
$[a(w)]_i=-2v^{(i)}(1-\sigma(2v^{(i)}w^\top u^{(i)}))$. The
following result verifies that $f(w)$ satisfies
Condition $C(s,\zeta,\rho_s)$ under mild conditions.
\begin{proposition}\label{prop:logistic_assump1}
Assume that for any index set $F$ with $|F| \le s$ we have $\forall
i$, $\|(u^{(i)})_F\| \le R_s$. Then the $\ell_2$-regularized
logistic loss satisfies Condition $C(s,\zeta,\rho_s)$ with any
\[
\zeta < \frac{2\lambda}{(4\sqrt{s}R_s^2+\lambda)^2} , \quad
\rho_s=\sqrt{1- 2 \zeta \lambda + \zeta^2
(4\sqrt{s}R_s^2+\lambda)^2}.
\]
\end{proposition}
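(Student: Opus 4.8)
The plan is to deduce Condition $C(s,\zeta,\rho_s)$ from the restricted strong convexity/smoothness characterization in Part~(b) of Lemma~\ref{lemma:strong_smooth}, rather than verifying the contraction inequality by hand. Concretely, I would show that the $\ell_2$-regularized logistic loss $f$ is restricted $m_s$-strongly convex with $m_s=\lambda$ and restricted $M_s$-strongly smooth with $M_s = 4\sqrt{s}R_s^2 + \lambda$; substituting these two constants into Lemma~\ref{lemma:strong_smooth}(b) then reproduces verbatim the stated range $\zeta < 2\lambda/(4\sqrt{s}R_s^2+\lambda)^2$ and the value $\rho_s=\sqrt{1-2\zeta\lambda+\zeta^2(4\sqrt{s}R_s^2+\lambda)^2}$. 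Thus the whole proposition reduces to certifying the two curvature constants on $s$-sparse differences.

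The first step is to compute the Hessian. Differentiating $\nabla f(w) = U a(w)/n + \lambda w$ once more gives
\[
\nabla^2 f(w) = \frac{4}{n}\sum_{i=1}^n \sigma'(2v^{(i)}w^\top u^{(i)})\, u^{(i)}(u^{(i)})^\top + \lambda I,
\]
where $\sigma'(z) = \sigma(z)(1-\sigma(z)) \in (0,\tfrac14]$. The first summand is a nonnegative combination of rank-one positive semidefinite matrices, hence positive semidefinite, so $\nabla^2 f(w) \succeq \lambda I$ for every $w$. Writing the second-order Taylor remainder in integral form, $f(x)-f(y)-\langle\nabla f(y),x-y\rangle = \int_0^1 (1-t)(x-y)^\top \nabla^2 f(y+t(x-y))(x-y)\,dt$, and restricting attention to $s$-sparse differences $x-y$ (whose support lies in some $F$ with $|F|\le s$, so that only the restricted block $[\nabla^2 f]_{FF}$ enters), the lower bound $\nabla^2 f \succeq \lambda I$ immediately delivers restricted strong convexity with $m_s = \lambda$.

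The crux is the restricted strong smoothness constant, i.e.\ an upper bound on the restricted maximal eigenvalue of the Hessian that makes the dependence on the sparsity level $s$ explicit. For any $F$ with $|F|\le s$, I would bound the spectral norm of the restricted log-likelihood Hessian $H_F := \frac{4}{n}\sum_i \sigma'_i (u^{(i)})_F((u^{(i)})_F)^\top$ by its maximum absolute row sum (Gershgorin), and then convert the inner $\ell_1$ factor to an $\ell_2$ factor via $\|(u^{(i)})_F\|_1 \le \sqrt{s}\,\|(u^{(i)})_F\|_2 \le \sqrt{s}\,R_s$, together with $|[u^{(i)}]_j| \le \|(u^{(i)})_F\| \le R_s$ and $\sigma'_i \le \tfrac14$. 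A short computation then gives $\|H_F\| \le \sqrt{s}\,R_s^2 \le 4\sqrt{s}\,R_s^2$, so $[\nabla^2 f]_{FF} \preceq (4\sqrt{s}R_s^2+\lambda)I$ and, again through the integral remainder, $f$ is restricted $M_s$-strongly smooth with $M_s = 4\sqrt{s}R_s^2 + \lambda$. Any valid smoothness constant no larger than this one would do, since the smoothness inequality is monotone in $M_s$; the $\sqrt{s}$ factor is exactly where the sparsity dependence (and some looseness) enters.

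Having established $m_s=\lambda$ and $M_s = 4\sqrt{s}R_s^2+\lambda$, the conclusion follows by invoking Lemma~\ref{lemma:strong_smooth}(b). I expect the only genuinely delicate point to be the restricted spectral-norm estimate in the third step: a naive coordinate-by-coordinate Gershgorin bound would scale like $sR_s^2$, and recovering the sharper $\sqrt{s}$ dependence requires applying the $\ell_1$-to-$\ell_2$ passage at the level of each sample's restricted feature vector $(u^{(i)})_F$ rather than on individual entries. Everything else is routine differentiation and the elementary $\sigma'\le\tfrac14$ bound.
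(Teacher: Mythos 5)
Your proof is correct and lands exactly where the paper's does --- invoking Lemma~\ref{lemma:strong_smooth}(b) with $m_s=\lambda$ and $M_s=4\sqrt{s}R_s^2+\lambda$ --- but it reaches the smoothness constant by a genuinely different route. The paper argues at first order, with no Hessian computation: Lipschitz continuity of the sigmoid gives $\|a(w)-a(w')\|_\infty \le 4R_s\|w-w'\|$, and then the chain $\|U_{F\bullet}(a(w)-a(w'))\| \le \|U_{F\bullet}(a(w)-a(w'))\|_1 \le |U_{F\bullet}|_1\,\|a(w)-a(w')\|_\infty$ together with $|U_{F\bullet}|_1 \le n\sqrt{s}R_s$ yields the restricted gradient-Lipschitz bound $\|\nabla_F f(w)-\nabla_F f(w')\|\le(4\sqrt{s}R_s^2+\lambda)\|w-w'\|$ directly. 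You argue at second order, bounding the restricted Hessian block via a row-sum (Gershgorin) estimate plus the $\ell_1$-to-$\ell_2$ step $\|(u^{(i)})_F\|_1\le\sqrt{s}R_s$; this is valid and produces the same $\sqrt{s}$ dependence, in fact with the sharper constant $\sqrt{s}R_s^2$, because you exploit $\sigma'\le\tfrac14$ where the paper uses Lipschitz constant $1$ for $\sigma$. Your monotonicity remark is exactly what is needed to close that factor-of-four gap: being $M$-strongly smooth with $M\le M_s$ implies being $M_s$-strongly smooth, so Lemma~\ref{lemma:strong_smooth}(b) applies verbatim with the stated $M_s=4\sqrt{s}R_s^2+\lambda$ and reproduces the claimed $\zeta$ range and $\rho_s$. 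One simplification worth noting: the Gershgorin step you flag as the delicate point can be skipped entirely, since the triangle inequality over the rank-one summands already gives $\|H_F\|\le\frac{4}{n}\sum_i\sigma'_i\|(u^{(i)})_F\|^2\le R_s^2$, an even sharper, $s$-free bound; the $\sqrt{s}$ factor in the stated constant is an artifact of the paper's $\ell_1/\ell_\infty$ manipulations rather than something your approach must reproduce.
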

A proof of this result is given in
Appendix~\ref{append:proof_prop_logistic_assump1}.

\subsubsection{Bounding the Estimation Error}

We are going to bound $\|\nabla_s f(\bar w)\|$ which we obtain from
Theorem~\ref{thrm:recovery_GraHTP} that controls the estimation
error bounds of GraHTP (with $s=k$) and FGraHTP (with $s=2k+\bar
k$). In the following deviation, we assume that the joint density of
the random vector $(u,v) \in \mathbb{R}^{p+1}$ is given by the
following exponential family distribution:
\begin{equation}\label{equat:joint_distr}
\mathbb{P}(u,v; \bar w) = \exp\left(v \bar w^\top u + B(u) - A(\bar
w) \right),
\end{equation}
where
\[
A(\bar w):= \log \sum_{v=\{-1,1\}}\int_{\mathbb{R}^p} \exp\left(v
\bar w^\top u + B(u) \right) du
\]
is the log-partition function. The term $B(u)$ characterizes the
marginal behavior of $u$. Obviously, the conditional distribution of
$v$ given $u$, $\mathbb{P}(v \mid u; \bar w)$, is given by the
logistical model~\eqref{equat:logistic}. By trivial algebra we can
obtain the following standard result which shows that the first
derivative of the logistic log-likelihood $l(w)$ yields the
cumulants of the random variables $v[u]_j$~\citep[see,
e.g.,][]{Wainwright-2008}:
\begin{equation}\label{equat:derivatives}
\frac{\partial l}{\partial [w]_j} = \frac{1}{n}\sum_{i=1}^n
\left\{-v^{(i)} [u^{(i)}]_j + \mathbb{E}_{v}[v[u^{(i)}]_j\mid
u^{(i)}]\right\}.
\end{equation}
Here the expectation $\mathbb{E}_{v}[\cdot \mid u]$ is taken over
the conditional distribution~\eqref{equat:logistic}. We introduce
the following sub-Gaussian condition on the random variate $v[u]_j$.
\begin{assumption}\label{assump:tail_1}
For all $j$, we assume that there exists constant $\sigma>0$ such
that for all $\eta$,
\[
\mathbb{E}[\exp(\eta v[u]_j)] \le \exp\left(\sigma^2\eta^2/2\right).
\]
\end{assumption}
This assumption holds when $[u]_j$ are sub-Gaussian (e.g., Gaussian
or bounded) random variables. The following result establishes the bound of
$\|\nabla_s f(\bar w)\|$.
\begin{proposition}\label{prop_logsitic}
If Assumption~\ref{assump:tail_1} holds, then with probability at
least $1-4p^{-1}$,
\[
\|\nabla_s f(\bar w)\| \le 4\sigma\sqrt{s\ln p/n} + \lambda \|\bar
w_s\|.
\]
\end{proposition}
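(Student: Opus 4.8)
The plan is to reduce the claim to a coordinate-wise concentration bound on the gradient of the unregularized logistic loss $l$, and to handle the regularization term separately. First I would write $\nabla f(\bar w) = \nabla l(\bar w) + \lambda \bar w$ and let $S = \supp(\nabla f(\bar w),s)$ be the support of its top $s$ entries, so that $\nabla_s f(\bar w) = (\nabla l(\bar w))_S + \lambda \bar w_S$. The triangle inequality together with the elementary estimates $\|(\nabla l(\bar w))_S\| \le \sqrt{s}\,\|\nabla l(\bar w)\|_\infty$ (valid for any index set of size $s$) and $\|\bar w_S\| \le \|\bar w_s\|$ (the top $s$ entries in modulus dominate any $s$-subset) then yields
\[
\|\nabla_s f(\bar w)\| \le \sqrt{s}\,\|\nabla l(\bar w)\|_\infty + \lambda \|\bar w_s\|.
\]
Since both estimates hold uniformly over all $s$-subsets, the randomness of $S$ causes no difficulty. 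It remains to show $\|\nabla l(\bar w)\|_\infty \le 4\sigma\sqrt{\ln p/n}$ with probability at least $1-4p^{-1}$.

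To this end I would use the cumulant identity~\eqref{equat:derivatives}, which expresses each coordinate as $[\nabla l(\bar w)]_j = \frac{1}{n}\sum_{i=1}^n\{-v^{(i)}[u^{(i)}]_j + \mathbb{E}_v[v[u^{(i)}]_j\mid u^{(i)}]\}$, and split it into the data term $\frac{1}{n}\sum_i(-v^{(i)}[u^{(i)}]_j)$ and the conditional-mean term $\frac{1}{n}\sum_i \mathbb{E}_v[v[u^{(i)}]_j\mid u^{(i)}]$. Because the conditional expectation is taken under the model at the true parameter $\bar w$ that generated the data, each summand is conditionally, hence unconditionally, mean zero; observe too that the bound in Assumption~\ref{assump:tail_1} forces $\mathbb{E}[v[u]_j]=0$. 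For the data term, Assumption~\ref{assump:tail_1} (symmetric in $\eta$) shows $-v[u]_j$ is sub-Gaussian with parameter $\sigma$. For the conditional-mean term, conditional Jensen gives $\exp(\eta\,\mathbb{E}_v[v[u]_j\mid u]) \le \mathbb{E}_v[\exp(\eta v[u]_j)\mid u]$, and taking expectation over $u$ shows $\mathbb{E}_v[v[u]_j\mid u]$ is sub-Gaussian with parameter $\sigma$ as well.

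Finally I would apply the sub-Gaussian (Chernoff) tail bound to each of the two averages of $n$ i.i.d.\ centered sub-Gaussian$(\sigma)$ variables, so that each coordinate average exceeds $2\sigma\sqrt{\ln p/n}$ with probability at most $2\exp(-2\ln p)=2p^{-2}$. A union bound over the $p$ coordinates controls the $\ell_\infty$-norm of each term by $2\sigma\sqrt{\ln p/n}$, each failing with probability at most $2p^{-1}$; a further union bound over the two events yields, with probability at least $1-4p^{-1}$, that $\|\nabla l(\bar w)\|_\infty \le 4\sigma\sqrt{\ln p/n}$. Substituting into the first display then completes the proof. The main obstacle is the conditional-mean term: unlike the data term it is not covered directly by Assumption~\ref{assump:tail_1}, and the conditional Jensen step is precisely what lets its sub-Gaussian parameter be controlled without any boundedness assumption on the covariates $[u]_j$.
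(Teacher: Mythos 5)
Your proposal is correct and follows essentially the same route as the paper's proof: split off the $\lambda \bar w$ term, bound the restricted norm by $\sqrt{s}\,\|\nabla l(\bar w)\|_\infty$, decompose each gradient coordinate into the data average and the conditional-mean average, and apply sub-Gaussian concentration plus a union bound at $\varepsilon = 4\sigma\sqrt{\ln p/n}$. The only cosmetic difference is that the paper centers both averages by $\mathbb{E}[v[u]_j]$ (so it never needs this mean to vanish), whereas you deduce $\mathbb{E}[v[u]_j]=0$ directly from Assumption~\ref{assump:tail_1}; both are valid, and your explicit conditional-Jensen step is exactly what the paper leaves implicit behind its citation of standard sub-Gaussian deviation inequalities.
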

A proof of this result can be found in
Appendix~\ref{append:proof_prop_logsitic}.
\begin{remark}
If we choose $\lambda = O(\sqrt{\ln p/ n})$, then with overwhelming
probability $\|\nabla_s f(\bar w)\|$ vanishes at the rate of $
O(\sqrt{s\ln p /n})$. This bound is superior to the bound
provided by~\citet[Section 4.2]{Bahmani-2013} which is
non-vanishing.
\end{remark}

\subsection{Sparsity-Constrained Precision Matrix Estimation}
\label{ssect:ggm}

An important class of sparse learning problems involves estimating
the precision (inverse covariance) matrix of high dimensional random
vectors under the assumption that the true precision matrix is
sparse. This problem arises in a variety of applications, among them
computational biology, natural language processing and document
analysis, where the model dimension may be comparable or
substantially larger than the sample size. 

Let $x$ be a $p$-variate random vector with zero-mean Gaussian
distribution $\mathcal {N}(0, \bar \Sigma)$. Its density is
parameterized by the precision matrix $\bar \Omega =
\bar\Sigma^{-1}\succ 0$ as follows:
\[
\phi(x; \bar\Omega) = \frac{1}{\sqrt{(2\pi)^p (\det
\bar\Omega)^{-1}}} \exp\left( -\frac{1}{2}x^\top \bar\Omega x
\right).
\]
It is well known that the conditional independence between the variables $[x]_i$ and $[x]_j$ given $\{[x]_k, k \neq i, j\}$ is
equivalent to $[\bar \Omega]_{ij} = 0$. The conditional independence relations between components of $x$, on the other hand, can be represented by a graph $\mathcal {G} = (V,E)$ in which the vertex set $V$
has $p$ elements corresponding to $[x]_1,...,[x]_p$, and the edge
set $E$ consists of edges between node pairs $\{[x]_i,[x]_j\}$. The edge between
$[x]_i$ and $[x]_j$ is excluded from $E$ if and only if $[x]_i$ and
$[x]_j$ are conditionally independent given other variables. This graph is known as Gaussian Markov random field
(GMRF)~\citep{Edwards-2000}. Thus for multivariate Gaussian distribution, estimating the support of the precision matrix $\bar
\Omega$ is equivalent to learning the structure of GMRF $\mathcal {G}$.  

Given i.i.d. samples $\mathbb{X}_n = \{x^{(i)}\}_{i=1}^n$ drawn from
$\mathcal {N}(0, \bar\Sigma)$, the negative log-likelihood, up to a
constant, can be written in terms of the precision matrix as
\[
\mathcal {L}(\mathbb{X}_n;\bar\Omega):= -\log\det \bar\Omega +
\langle \Sigma_n, \bar\Omega \rangle,
\]
where $\Sigma_n$ is the sample covariance matrix. We are interested
in the problem of estimating a sparse precision $\bar \Omega$ with
no more than a pre-specified number of off-diagonal non-zero
entries. For this purpose, we consider the following cardinality
constrained log-determinant program:
\begin{equation}\label{prob:SISC_card_constraint}
\min_{\Omega \succ 0} L(\Omega):=-\log\det \Omega + \langle
\Sigma_n, \Omega \rangle, \quad \st \|\Omega^-\|_0 \le 2k,
\end{equation}
where $\Omega^{-}$ is the restriction of $\Omega$ on the
off-diagonal entries, $\|\Omega^{-}\|_0=|\supp(\Omega^{-})|$ is the
cardinality of the support set of $\Omega^{-}$ and integer $k>0$
bounds the number of edges $|E|$ in GMRF.

\subsubsection{Verifying Condition $C(s,\zeta,\rho_s)$}

It is easy to show that the Hessian $\nabla^2 L(\Omega) =
\Omega^{-1} \otimes \Omega^{-1}$, where $\otimes$ is the Kronecker
product operator. The following result shows that $L$ satisfies
Condition $C(s,\zeta,\rho_s)$ if the eigenvalues of $\Omega$ are lower
bounded from zero and upper bounded.
\begin{proposition}\label{prop:logdet_SRH}
Suppose that $\|\Omega^-\|_0 \le s$ and $\alpha_s I \preceq \Omega
\preceq \beta_s I$ for some $0 < \alpha_s \le \beta_s$. Then
$L(\Omega)$ satisfies Condition $C(s,\zeta,\rho_s)$ with any
\[
\zeta < \frac{2\alpha_s^4}{\beta_s^2} , \quad \rho_s=\sqrt{1- 2
\zeta \beta_s^{-2} + \zeta^2 \alpha_s^{-4}}.
\]
\end{proposition}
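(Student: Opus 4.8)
The plan is to reduce the claim to Part~(b) of Lemma~\ref{lemma:strong_smooth} by showing that, on the feasible region $\mathcal{K}_s = \{\Omega : \alpha_s I \preceq \Omega \preceq \beta_s I\}$, the log-determinant objective $L$ is restricted $m_s$-strongly convex and $M_s$-strongly smooth with the identifications $m_s = \beta_s^{-2}$ and $M_s = \alpha_s^{-2}$. Once these two constants are in hand, substituting them into Lemma~\ref{lemma:strong_smooth}(b) reproduces exactly the stated thresholds: $2m_s/M_s^2 = 2\alpha_s^4/\beta_s^2$ and $\rho_s = \sqrt{1 - 2\zeta m_s + \zeta^2 M_s^2} = \sqrt{1 - 2\zeta\beta_s^{-2} + \zeta^2\alpha_s^{-4}}$, so nothing remains to compute after the strong convexity/smoothness bounds are in place.

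To obtain those bounds I would work directly with the Hessian. Writing $H$ for a symmetric perturbation (identified with $\vect(H)$ in the vectorized picture), the Kronecker identity $(A\otimes B)\vect(H) = \vect(BHA^\top)$, applied with $A=B=\Omega^{-1}=\Omega^{-\top}$, turns the quadratic form of $\nabla^2 L(\Omega) = \Omega^{-1}\otimes\Omega^{-1}$ into
\[
\vect(H)^\top (\Omega^{-1}\otimes\Omega^{-1}) \vect(H) = \Tr(\Omega^{-1}H\Omega^{-1}H) = \|\Omega^{-1/2}H\Omega^{-1/2}\|_F^2 .
\]
The eigenvalues of $\Omega^{-1}\otimes\Omega^{-1}$ are the pairwise products of the eigenvalues of $\Omega^{-1}$; since $\alpha_s I \preceq \Omega \preceq \beta_s I$ forces $\beta_s^{-1} I \preceq \Omega^{-1} \preceq \alpha_s^{-1} I$, every such product lies in the interval $[\beta_s^{-2}, \alpha_s^{-2}]$. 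Hence, for every symmetric $H$ and every $\Omega \in \mathcal{K}_s$,
\[
\beta_s^{-2}\|H\|_F^2 \le \vect(H)^\top (\Omega^{-1}\otimes\Omega^{-1}) \vect(H) \le \alpha_s^{-2}\|H\|_F^2 .
\]
Note that this two-sided bound is uniform over all perturbation directions, so the sparsity pattern of $H$ plays no role beyond fixing which spectral bounds $(\alpha_s,\beta_s)$ are available for the relevant support size $s$.

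Integrating the Hessian bound along the segment $\Omega_t = \Omega_2 + t(\Omega_1 - \Omega_2)$, $t\in[0,1]$, through the second-order Taylor expansion gives
\[
L(\Omega_1) - L(\Omega_2) - \langle \nabla L(\Omega_2), \Omega_1 - \Omega_2\rangle = \int_0^1 (1-t)\, \vect(\Omega_1 - \Omega_2)^\top \nabla^2 L(\Omega_t)\,\vect(\Omega_1 - \Omega_2)\, dt ,
\]
and the integrand is pinned between the displayed eigenvalue bounds provided each $\Omega_t$ lies in $\mathcal{K}_s$. The one point needing care is precisely this feasibility of the intermediate matrices, but it is immediate: $\mathcal{K}_s$ is the intersection of the two linear matrix inequalities $\Omega \succeq \alpha_s I$ and $\Omega \preceq \beta_s I$ and is therefore convex, so $\Omega_t \in \mathcal{K}_s$ whenever $\Omega_1, \Omega_2 \in \mathcal{K}_s$. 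This yields
\[
\tfrac{1}{2}\beta_s^{-2}\|\Omega_1 - \Omega_2\|_F^2 \le L(\Omega_1) - L(\Omega_2) - \langle \nabla L(\Omega_2), \Omega_1 - \Omega_2\rangle \le \tfrac{1}{2}\alpha_s^{-2}\|\Omega_1 - \Omega_2\|_F^2 ,
\]
which is exactly restricted strong convexity and smoothness with $m_s = \beta_s^{-2}$ and $M_s = \alpha_s^{-2}$. Feeding these into Lemma~\ref{lemma:strong_smooth}(b) then finishes the proof. The only genuinely substantive ingredients are the Kronecker eigenvalue computation and the convexity observation that keeps the Taylor path inside the spectral box; everything else is bookkeeping.
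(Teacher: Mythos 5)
Your proof is correct and takes essentially the same route as the paper's: bound the eigenvalues of the Kronecker-product Hessian $\Omega^{-1}\otimes\Omega^{-1}$ in $[\beta_s^{-2},\alpha_s^{-2}]$, conclude restricted strong convexity/smoothness with $m_s=\beta_s^{-2}$ and $M_s=\alpha_s^{-2}$, and invoke Part~(b) of Lemma~\ref{lemma:strong_smooth}. The only difference is one of detail: you make explicit the second-order Taylor integration and the convexity of the spectral box $\{\Omega : \alpha_s I \preceq \Omega \preceq \beta_s I\}$ (which keeps the intermediate matrices feasible), steps the paper's terser argument leaves implicit.
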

\begin{proof}
Due to the fact that the eigenvalues of Kronecker products of
symmetric matrices are the products of the eigenvalues of their
factors, it holds that
\[
\beta_s^{-2} I \preceq \Omega^{-1}\otimes \Omega^{-1} \preceq
\alpha_s^{-2} I.
\]
Therefore we have $\beta_s^{-2} \le \|\nabla ^2 L(\Omega)\|_2 \le
\alpha_s^{-2}$ which implies that $L(\Omega)$ is
$\beta_s^{-2}$-strongly convex and $\alpha_s^{-2}$-strongly smooth.
The desired result follows directly from the Part(b) of
Lemma~\ref{lemma:strong_smooth}.
\end{proof}
Motivated by Proposition~\ref{prop:logdet_SRH}, we consider applying
GraHTP to the following modified version of
problem~\eqref{prob:SISC_card_constraint}:
\begin{equation}\label{prob:SISC_card_constraint_bounds}
\min_{\alpha I \preceq \Omega \preceq \beta I} L(\Omega), \quad \st
\|\Omega^-\|_0 \le 2k,
\end{equation}
where $0<\alpha \le \beta$ are two constants which respectively
lower and upper bound the eigenvalues of the desired solution. To
roughly estimate $\alpha$ and $\beta$, we employ a rule proposed
by~\citet[Proposition 3.1]{Lu-VSM-2009} for the $\ell_1$
log-determinant program. Specifically, we set
\[
\alpha = (\|\Sigma_n\|_2 + n\xi)^{-1}, \quad \beta =\xi^{-1}(n -
\alpha\Tr(\Sigma_n)),
\]
where $\xi$ is a small enough positive number (e.g., $\xi=10^{-2}$
as utilized in our experiments).

\subsubsection{Bounding the Estimation Error.}

Let $h:= \vect(\nabla L(\bar\Omega))$. It is known from
Theorem~\ref{thrm:recovery_GraHTP} that the estimation error is
controlled by $\|h_s\|_2$. Since $\nabla L(\bar\Omega) =
-\bar\Omega^{-1} + \Sigma_n = \Sigma_n - \bar\Sigma$, we have
$\|h_s\|_2 \le \sqrt{s} |\Sigma_n - \bar\Sigma|_\infty$. It is known
that $|\Sigma_n - \bar\Sigma|_\infty \le \sqrt{\log p/n}$ with
probability at least $1-c_0p^{-c_1}$ for some positive constants
$c_0$ and $c_1$ and sufficiently large $n$~\citep[see, e.g.,][Lemma
1]{Ravikumar-EJS-2011}. Therefore with overwhelming probability we
have $\|h_s\|_2 = O(\sqrt{s \log p/n})$ when $n$ is sufficiently
large.

\subsubsection{A Modified GraHTP}

Unfortunately, GraHTP is not directly applicable to the
problem~\eqref{prob:SISC_card_constraint_bounds} due to the presence
of the constraint $\alpha I \preceq \Omega \preceq \beta I$ in
addition to the sparsity constraint. To address this issue, we need
to modify the debiasing step (S3) of GraHTP to minimize $L(\Omega)$
over the constraint of $\alpha I \preceq \Omega \preceq \beta I$ as
well as the support set $F^{(t)}$:
\begin{equation}\label{subprob:omega_F}
\min_{\alpha I \preceq \Omega \preceq \beta I} L(\Omega), \quad \st
\supp(\Omega) \subseteq F^{(t)}.
\end{equation}
Since this problem is convex, any off-the-shelf convex solver can be
applied for optimization. In our implementation, we resort to
alternating direction method (ADM) for solving this subproblem because of
its reported efficiency~\citep{Boyd-ADM-2010,Yuan-ADM-2012}. The
implementation details of ADM for solving~\eqref{subprob:omega_F}
are deferred to Appendix~\ref{append:adm}. The modified GraHTP for
the precision matrix estimation problem is formally described in
Algorithm~\ref{alg:GraHTP_Modified}.

Generally speaking, the guarantees in Theorem~\ref{thrm:convergence}
and Theorem~\ref{thrm:recovery_GraHTP} are not valid for the
modified GraHTP. However, if $\bar\Omega - \alpha I$ and $\beta I
-\bar\Omega$ are diagonally dominant, then with a slight
modification of proof, we can prove that the Part (a) of
Theorem~\ref{thrm:recovery_GraHTP} is still valid for the modified
GraHTP. We sketchily describe the proof idea as follows: Let
$Z:=[\bar\Omega]_{F^{(t)}}$. Since $\bar\Omega - \alpha I$ and
$\beta I -\bar\Omega$ are diagonally dominant, we have $Z - \alpha
I$ and $\beta I - Z$ are also diagonally dominant and thus $\alpha I
\preceq Z \preceq \beta I$. Since $\Omega^{(t)}$ is the minimum of
$L(\Omega)$ restricted over the union of the cone $\alpha I \preceq
\Omega \preceq \beta I$ and the supporting set $F^{(t)}$, we have
$\langle\nabla L(\Omega^{(t)}), Z - \Omega^{(t)}\rangle \ge 0$. The
remaining of the arguments follows that of the Part(a) of
Theorem~\ref{thrm:recovery_GraHTP}.

\begin{algorithm} \label{alg:GraHTP_Modified}
\SetKwInOut{Input}{Input}\SetKwInOut{Output}{Output}\SetKw{Initialization}{Initialization:}

\Initialization{$\Omega^{(0)}$ with $\Omega^{(0)} \succ 0$ and
$\|(\Omega^{(0)})^-\|_0\le 2k$ and $\alpha I \preceq \Omega^{(0)}
\preceq \beta I$ (typically $\Omega^{(0)}=\alpha I$), $t=1$.}

\Output{$\Omega^{(t)}$.}

\Repeat{halting condition holds} {

(S1) Compute $\tilde \Omega^{(t)} = \Omega^{(t-1)} - \eta \nabla
L(\Omega^{(t-1)})$;

(S2) Let $\tilde F^{(t)} = \supp((\tilde \Omega^{(t)})^-,2k)$ be the
indices of $(\tilde \Omega^{(t)})^-$ with the largest $2k$ absolute
values and $F^{(t)}= \tilde F^{(t)}\cup \{(1,1),...,(p,p)\}$;

(S3) Compute $\Omega^{(t)} = \argmin\left\{ L(\Omega), \alpha I
\preceq \Omega \preceq \beta I, \supp(\Omega) \subseteq
F^{(t)}\right\}$;

$t=t+1$;

}

\caption{A Modified GraHTP for Sparse Precision Matrix Estimation.}
\end{algorithm}

\section{Experimental Results}
\label{sect:experiments}

This section is devoted to show the empirical performances of GraHTP
and FGraHTP when applied to sparse logistic regression and sparse
precision matrix estimation problems. Here we do not report the
results of our algorithms in compressive sensing tasks because in
these tasks GraHTP and FGraHTP reduce to the well studied
HTP~\citep{Foucart-2011} and IHT~\citep{Blumensath-2009},
respectively. Our algorithms are implemented in Matlab 7.12 running
on a desktop with Intel Core i7 3.2G CPU and 16G RAM.

\subsection{Sparsity-Constrained $\ell_2$-Regularized Logistic Regression.}

\subsubsection{Monte-Carlo Simulation}

We consider a synthetic data model identical to the one used
in~\citep{Bahmani-2013}. The sparse parameter $\bar w$ is a $p =
1000$ dimensional vector that has $\bar k=100$ nonzero entries drawn
independently from the standard Gaussian distribution. Each data
sample is an independent instance of the random vector $u$ generated
by an autoregressive process $ [u]_{i+1} = \rho [u]_i +
\sqrt{1-\rho^2} [a]_i$ with $[u]_1 \sim \mathcal {N}(0,1)$, $[a]_i
\sim \mathcal {N}(0,1)$, and $\rho=0.5$ being the correlation. The
data labels, $v \in \{-1,1\}$, are then generated randomly according to
the Bernoulli distribution
\[
\mathbb{P}(v = 1|u; \bar w) = \frac{\exp (2 \bar w^\top u)}{1+\exp
(2 \bar w^\top u)}.
\]
We fix the regularization parameter $\lambda=10^{-4}$ in the
objective of~\eqref{prob:logisic}. We are interested in the
following {two cases}:
\begin{enumerate}
  \item \textbf{Case 1}: Cardinality $k$ is fixed and sample size $n$ is varying: we test with $k=100$ and $n \in\{100, 200,..., 2000\}$.
  \item \textbf{Case 2}: Sample size $n$ is fixed and cardinality $k$ is varying: we test with $n=500$ and $k\in\{100,150,...,500\}$.
\end{enumerate}
For each case, we compare GraHTP and FGraHTP with two
state-of-the-art greedy selection methods:
GraSP~\citep{Bahmani-2013} and FBS (Forward Basis
Selection)~\citep{Yuan-2013}. As aforementioned, GraSP is also a hard-thresholding-type
method. This method simultaneously selects at each iteration $k$
nonzero entries and update their values via exploring the top $k$ entries in the previous iterate as well as the top $2k$
entries in the previous gradient. FBS is a forward-selection-type
method. This method iteratively selects an atom from the dictionary
and minimizes the objective function over the linear combinations of
all the selected atoms. Note that all the considered algorithms have
geometric rate of convergence. We will compare the computational
efficiency of these methods in our empirical study. We initialize
$w^{(0)} = 0$. Throughout our experiment, we set the stopping
criterion as $\|w^{(t)} - w^{(t-1)}\|/\|w^{(t-1)}\|\le 10^{-4}$.\\

\noindent\textbf{Results.} Figure~\ref{fig:solution_lr}(a)
presents the estimation errors of the considered algorithms. From
the left panel of Figure~\ref{fig:solution_lr}(a) (i.e., Case
1) we observe that: (i) when cardinality $k$ is fixed, the
estimation errors of all the considered algorithms tend to decrease
as sample size $n$ increases; and (ii) in this case GraHTP and
FGraHTP are comparable and both are superior to GraSP and FBS. From
the right panel of Figure~\ref{fig:solution_lr}(a) (i.e., Case
2) we  observe that: (i) when $n$ is fixed, the estimation errors
of all the considered algorithms but FBS tend to increase as $k$
increases (FBS is relatively insensitive to $k$ because it is a
forward selection method); and (ii) in this case GraHTP and FGraHTP
are comparable and both are superior to GraSP and FBS at relatively
small $k<200$. Figure~\ref{fig:solution_lr}(b) shows the CPU
 times of the considered algorithms. From this group of
results we  observe that in most cases, FBS is the fastest one
while GraHTP and FGraHTP are superior or comparable to GraSP in
computational time. We also observe that when $k$ is relatively
small, GraHTP is even faster than FGraHTP although FGraHTP is
cheaper in per-iteration overhead. This is partially because when
$k$ is small, GraHTP tends to need  fewer iterations
than FGraHTP to converge.

\begin{figure}[h!]
\begin{center} \subfigure[Estimation
Error]{
\includegraphics[width=3in]{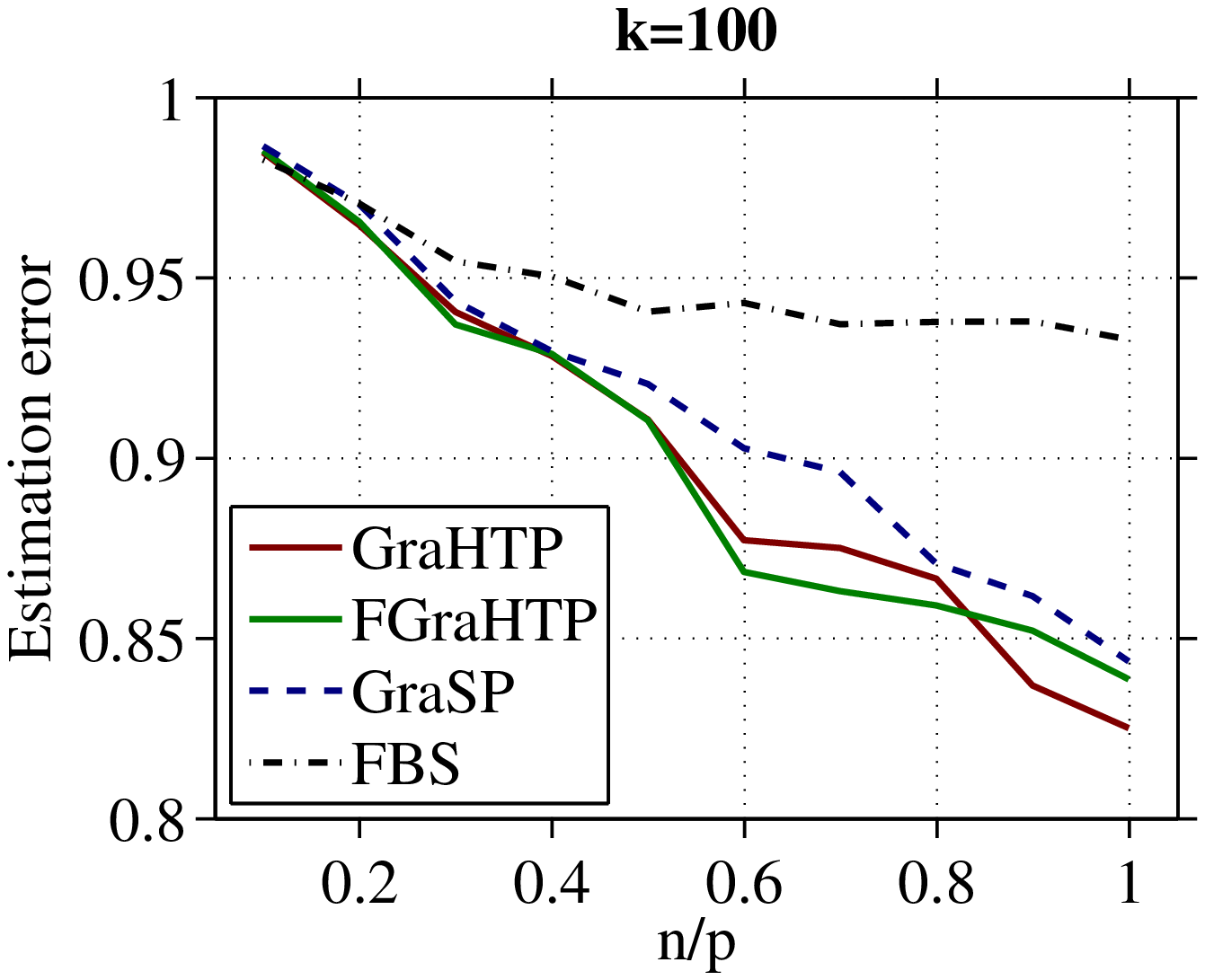}\hspace{0.2in}
\includegraphics[width=3in]{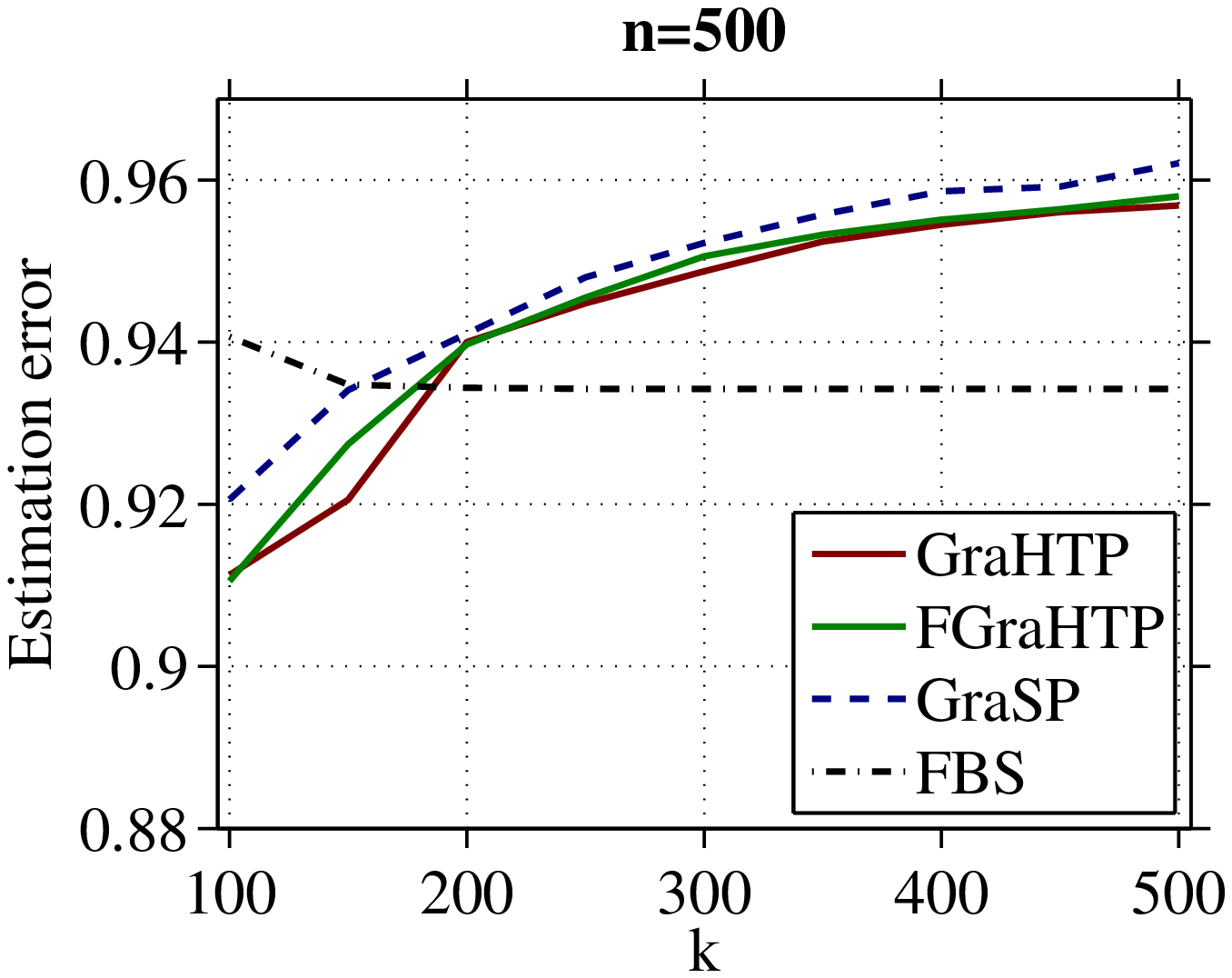}}\vspace{-0.1in}
\subfigure[CPU Running Time ]
{
\includegraphics[width=3in]{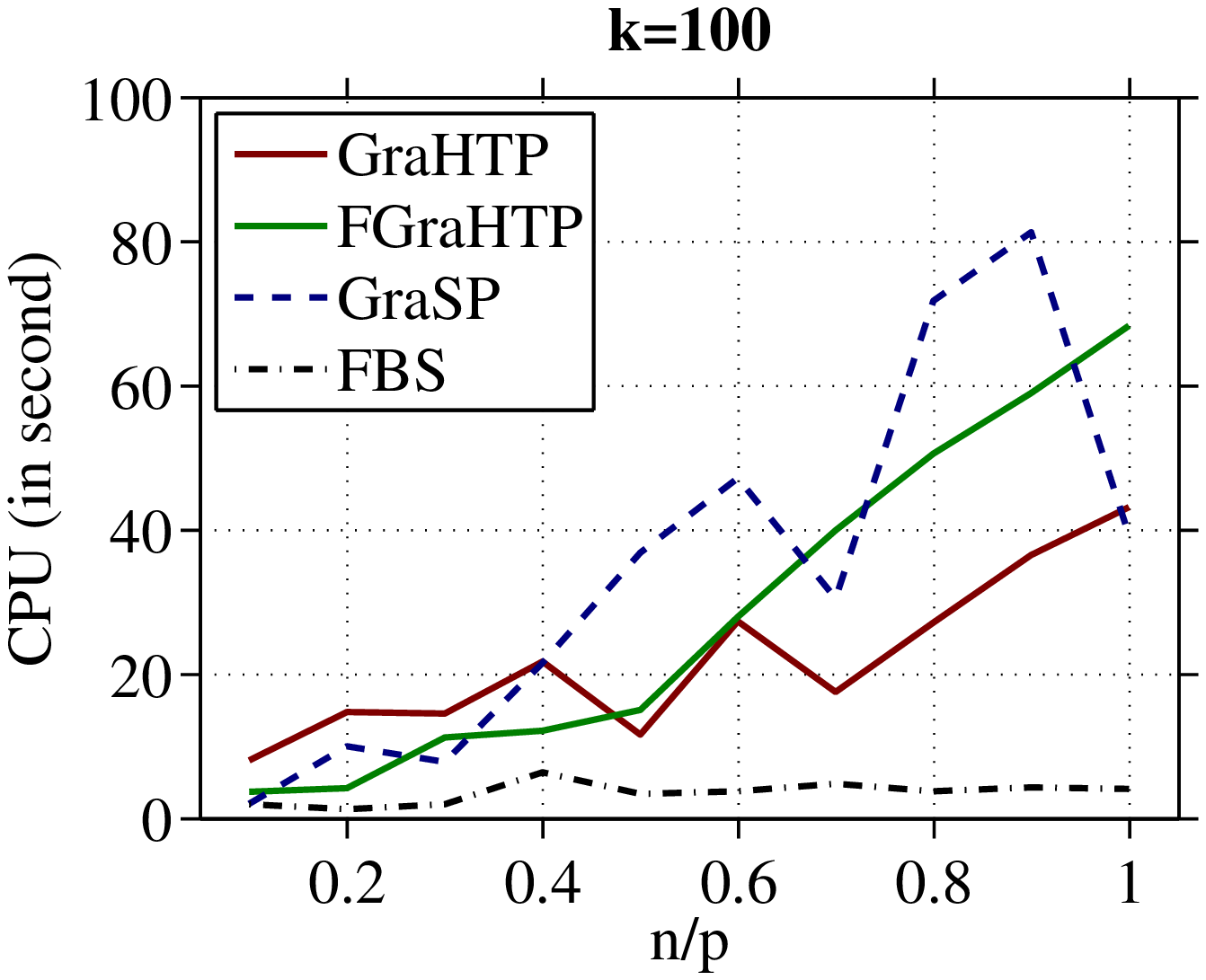}\hspace{0.2in}
\includegraphics[width=3in]{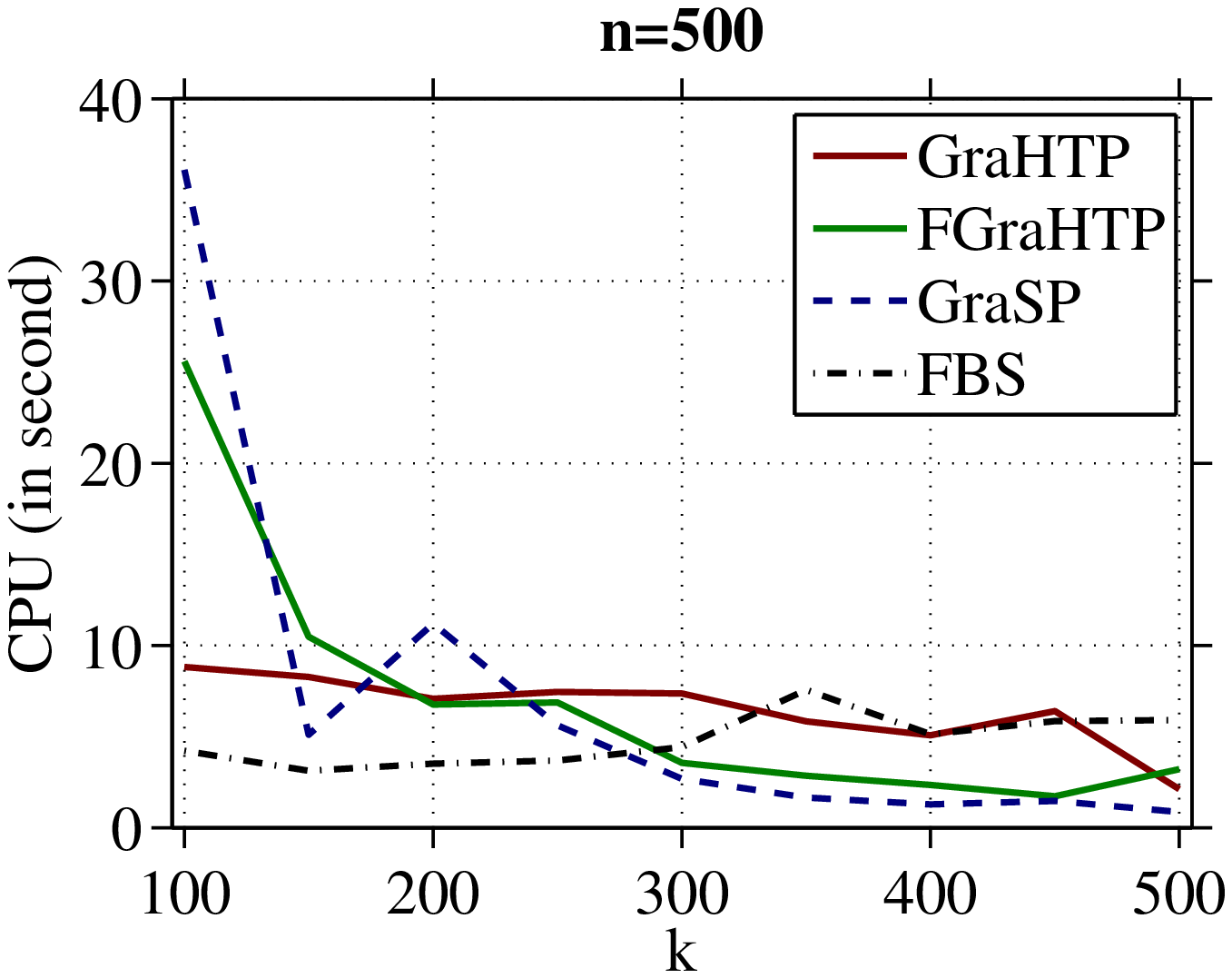}}
\end{center}
\vspace{-0.3in}
\caption{Simulation data: estimation error and CPU time of the considered
algorithms. }\label{fig:solution_lr}\vspace{-0.1in}
\end{figure}

\newpage

\subsubsection{Real Data}

The algorithms are also compared on the \textsf{rcv1.binary} dataset
($p$ = 47,236) which is a popular dataset for binary
classification on sparse data. A training subset of size $n$ =
20,242 and a testing subset of size 20,000 are used. We test with
sparsity parameters $k\in\{100, 200, ..., 1000\}$ and fix the
regularization parameter $\lambda = 10^{-5}$. The initial vector is
$w^{(0)} = 0$ for all the considered algorithms. We set the stopping
criterion as $\|w^{(t)} - w^{(t-1)}\|/\|w^{(t-1)}\|\le 10^{-4}$ or
the iteration stamp $t>50$.

Figure~\ref{fig:rcv1} shows the evolving curves of empirical
logistic loss for $k= 200, 400, 800, 1000$. It can be observed from
this figure that GraHTP and GraSP are comparable in terms of
convergence rate and they are superior to FGraHTP and FBS. The
testing classification errors and CPU running time of the considered
algorithms are provided in Figure~\ref{fig:rcv1_cpu_error}: (i) in
terms of accuracy, all the considered methods are comparable; and
(ii) in terms of running time, FGraHTP is the most efficient one; GraHTP is significantly faster than GraSP and FBS. The reason that FGraHTP runs fastest is because it has very low per-iteration cost although its convergence
curve is slightly less sharper than GraHTP and GraSP (see
Figure~\ref{fig:rcv1}). To summarize, GraHTP and FGraHTP achieve
better trade-offs between accuracy and efficiency than GraHTP and
FBS on \textsf{rcv1.binary} dataset.

\begin{figure}[h!]
\centering
\includegraphics[width=3in]{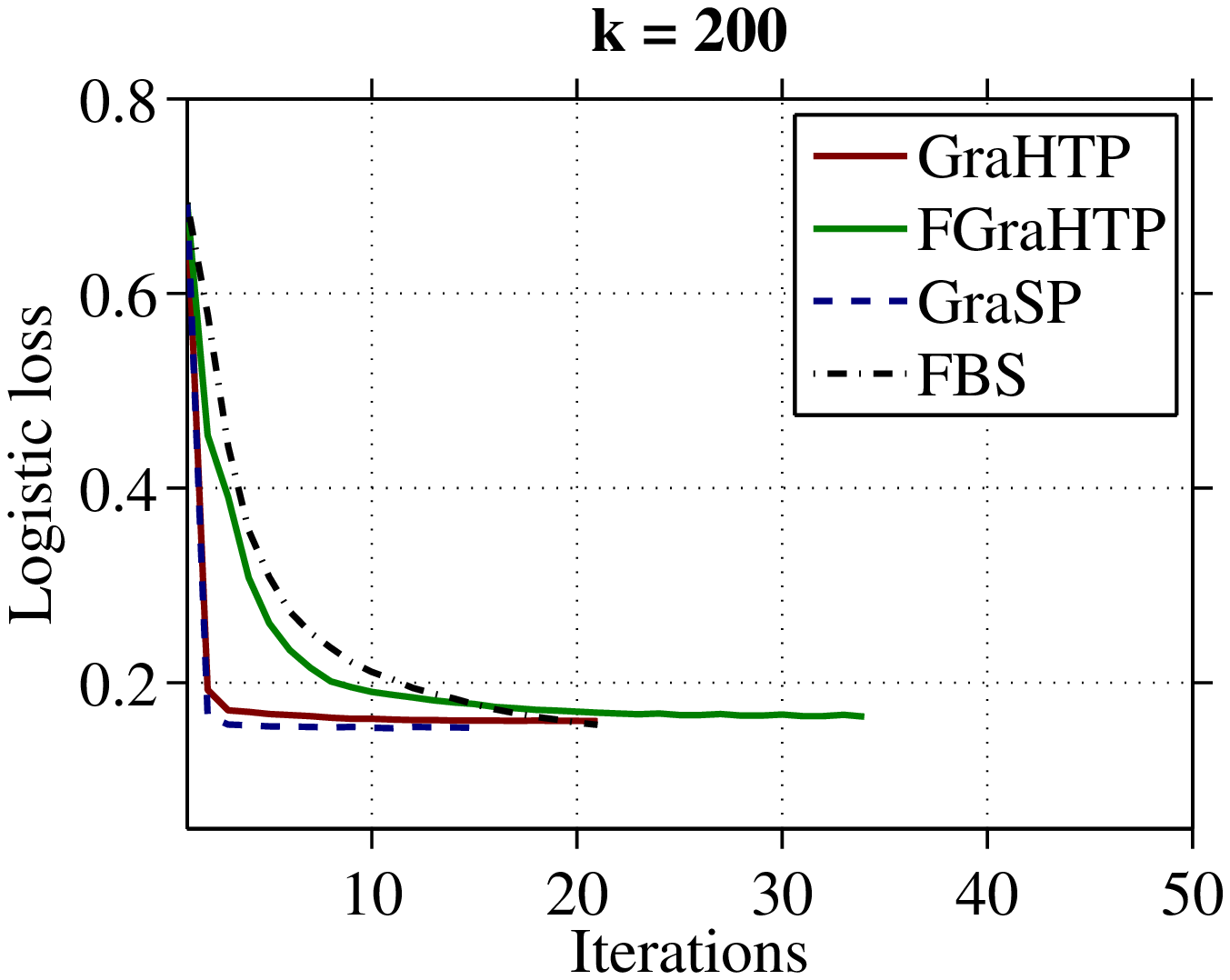}
\includegraphics[width=3in]{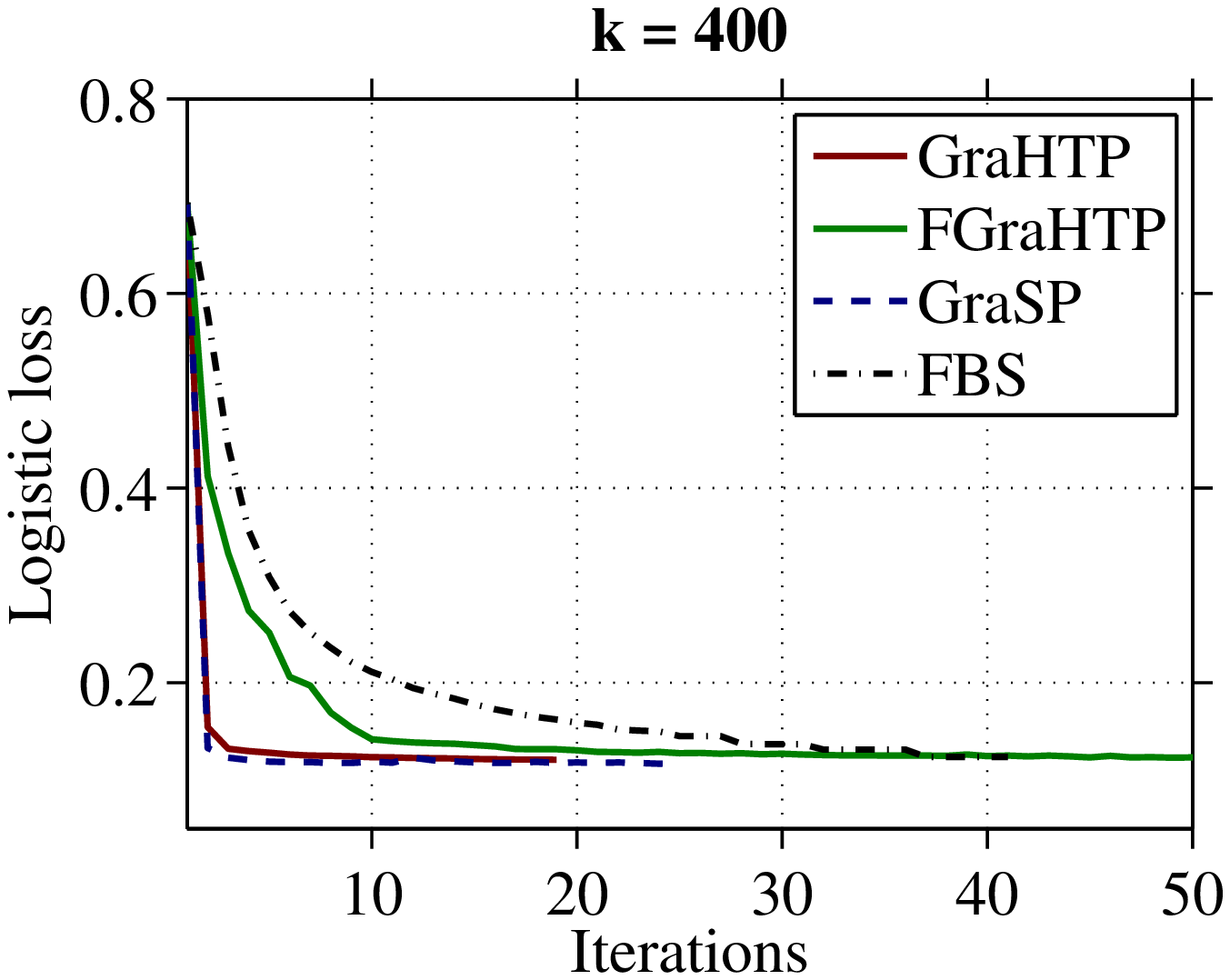}
\includegraphics[width=3in]{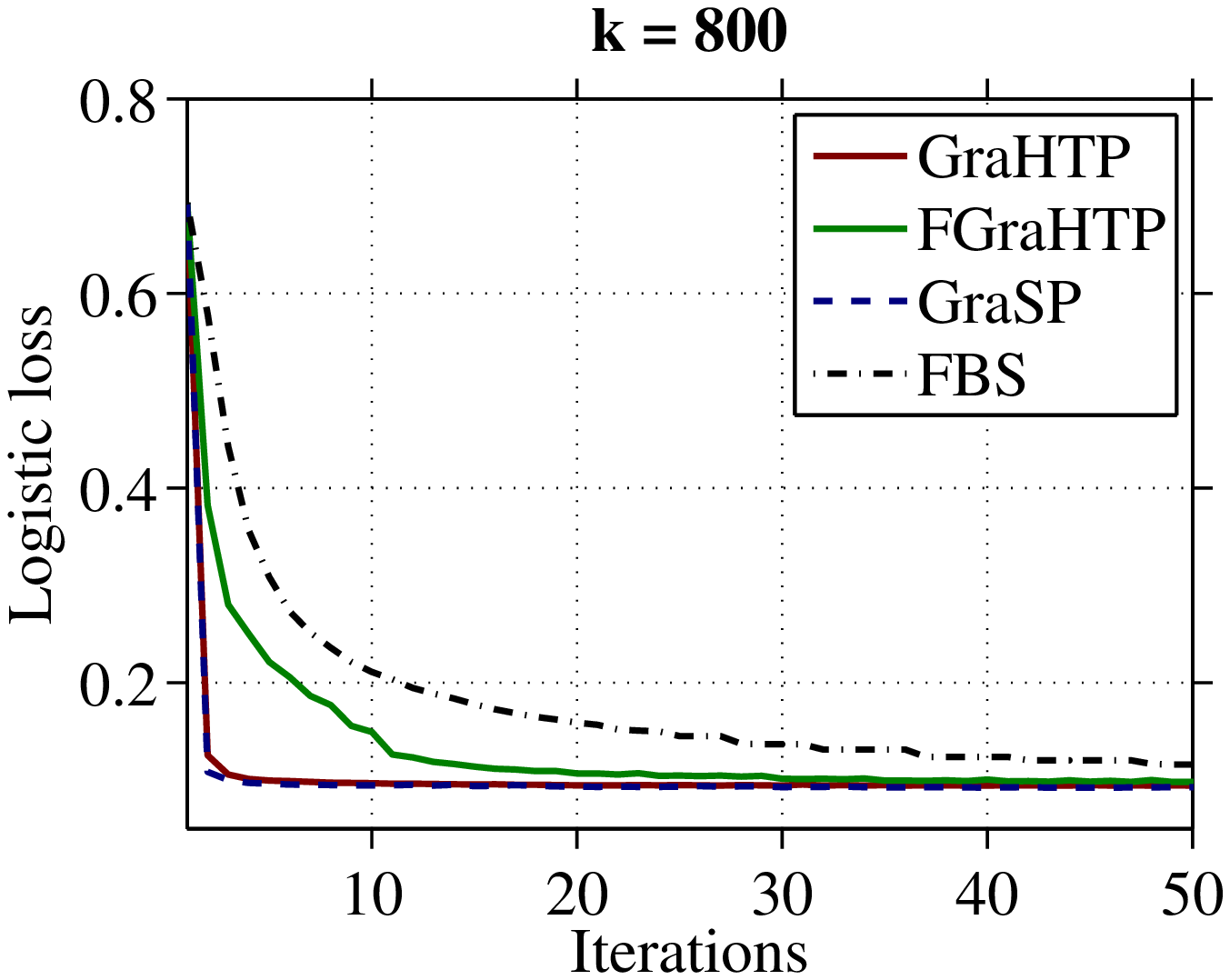}
\includegraphics[width=3in]{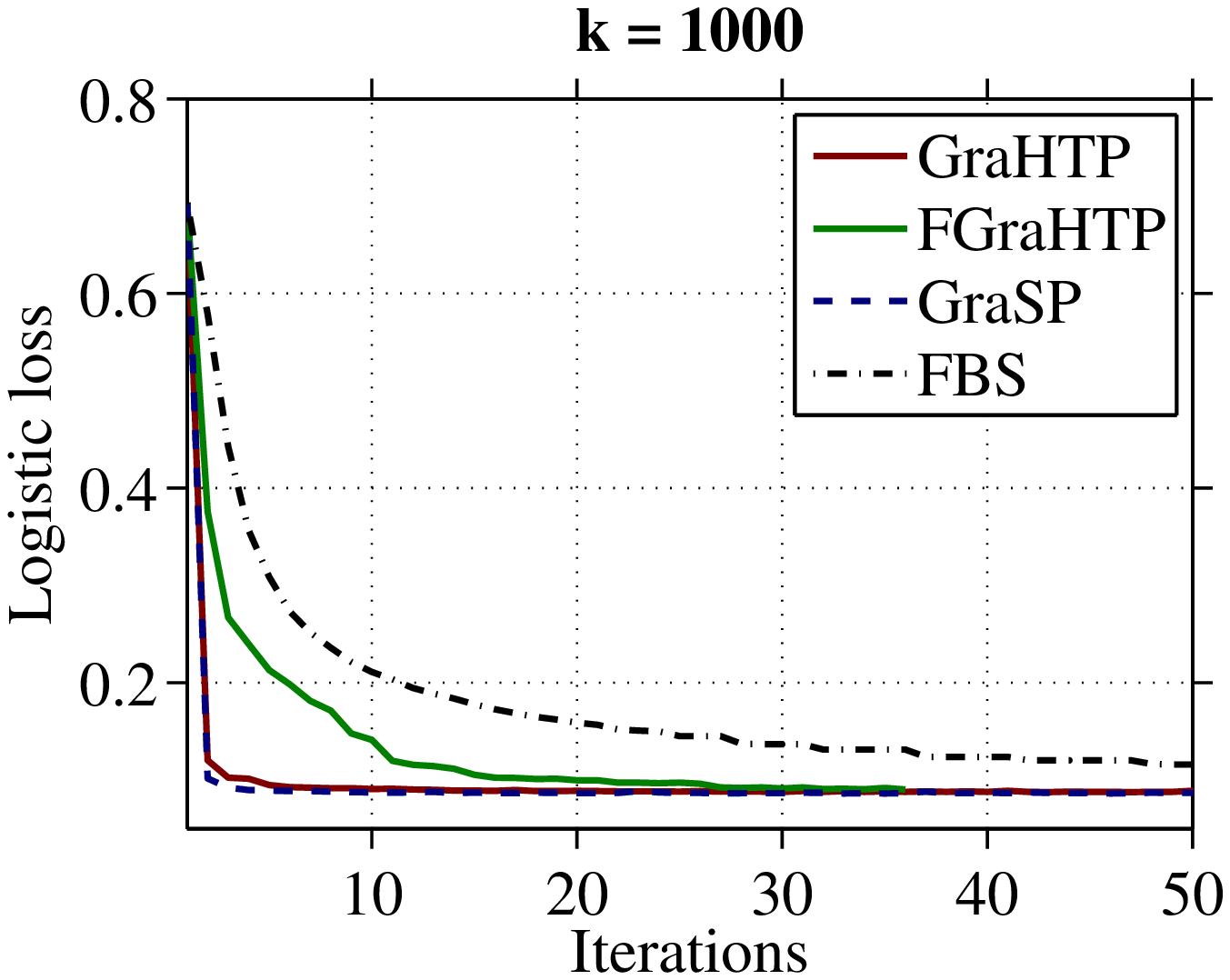}
\caption{\textsf{rcv1.binary} data: $\ell_2$-regularized logistic loss \emph{vs.}
number of iterations.}\label{fig:rcv1}
\end{figure}

\begin{figure}[h!]
\centering
\includegraphics[width=3in]{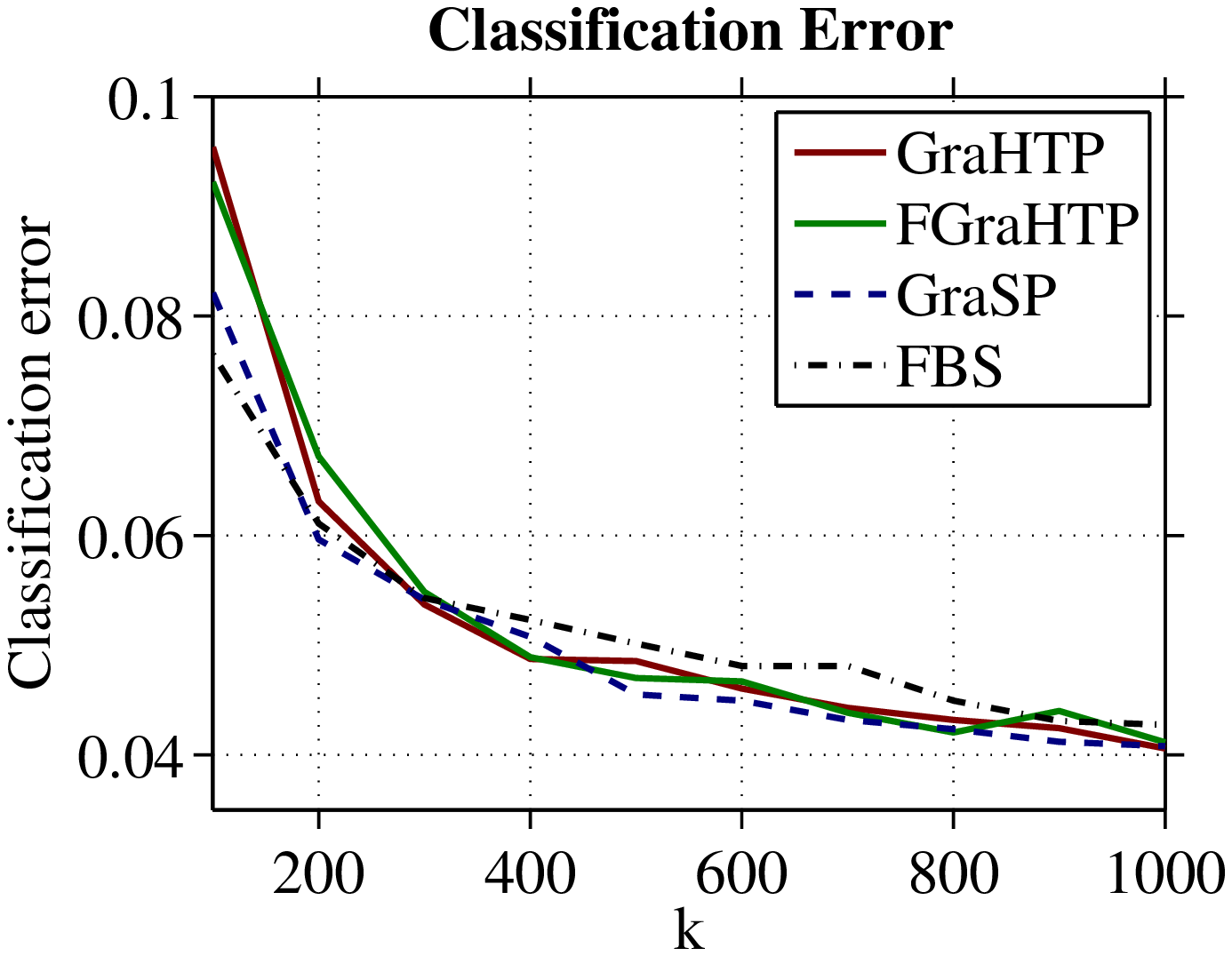}
\includegraphics[width=3in]{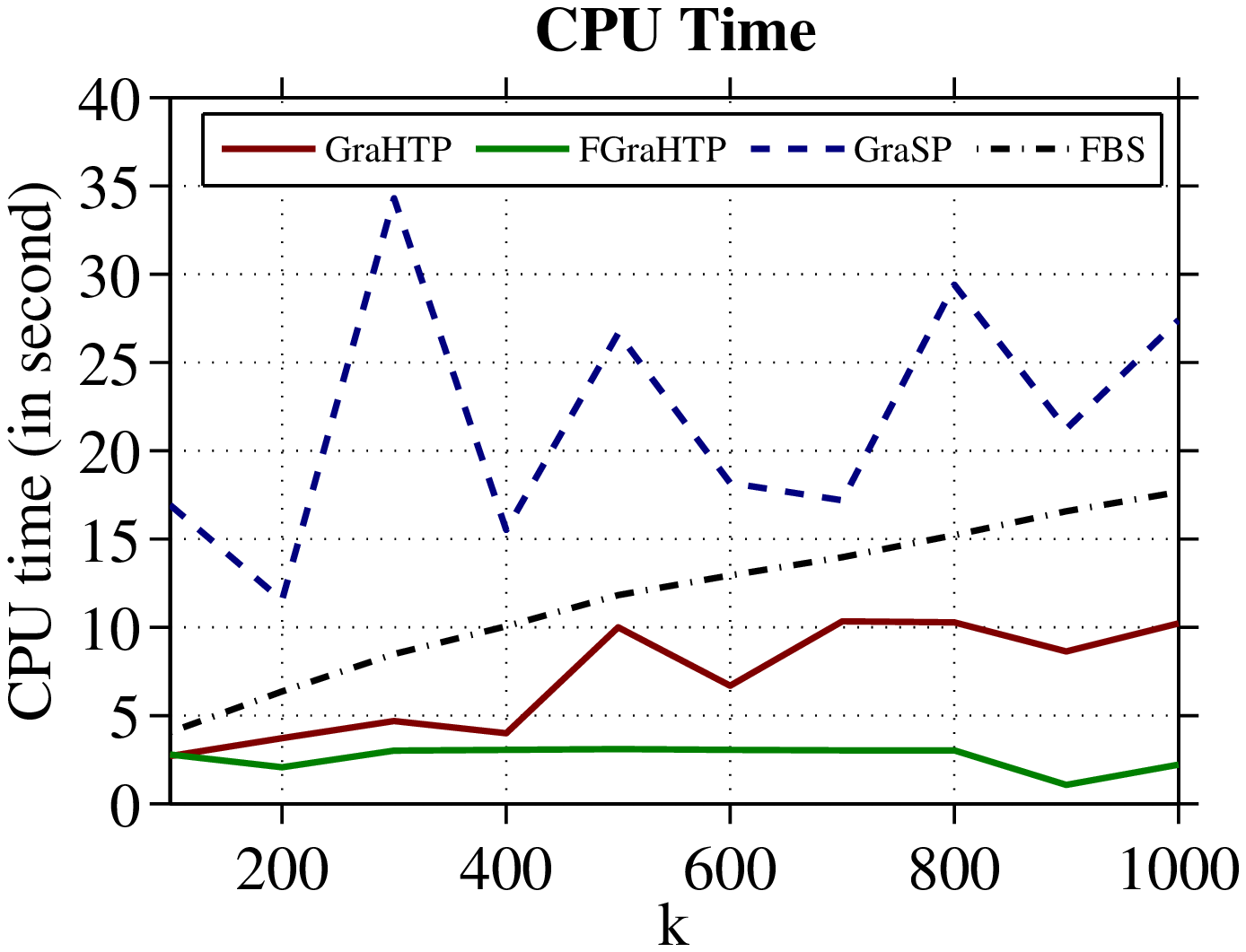}
\caption{\textsf{rcv1.binary} data: Classification error and CPU running time
curves of the considered methods.}\label{fig:rcv1_cpu_error}
\end{figure}

%\begin{table}[htb]
%\begin{center}
%\caption{rcv-1: Comparison of classification error and CPU time.
%\label{tab:rcv1_results}}
%\begin{tabular}{c c c c c}
%\hline
%Methods  & GraHTP  & FGraHTP & GraSP  & FBS \\
%\hline
%Classification Error   & 4.79 &  4.70 & \textbf{4.46} & 4.83 \\
%CPU Time (in Sec.)   & 10.22 & \textbf{3.07} & 28.68 & 12.76 \\
%
%\hline
%\end{tabular}
%\end{center}
%\end{table}

\newpage
%\clearpage

\subsection{Sparsity-Constrained Precision Matrix Estimation}

\subsubsection{Monte-Carlo Simulation}

Our simulation study employs the sparse precision matrix model
$\bar\Omega = B + \sigma I$ where each off-diagonal entry in $B$ is
generated independently and equals 1 with probability $P = 0.1$ or 0
with probability $1-P = 0.9$. $B$ has zeros on the diagonal, and
$\sigma$ is chosen so that the condition number of $\bar \Omega$ is
$p$. We generate a training sample of size $n=100$ from $\mathcal
{N}(0, \bar\Sigma)$, and an independent sample of size 100 from the
same distribution for tuning the parameter $k$. We compare
performance for different values of $p \in \{30, 60, 120, 200\}$,
replicated 100 times each.

We compare the modified GraHTP (see
Algorithm~\ref{alg:GraHTP_Modified}) with GraSP and FBS. To adopt
GraSP to sparse precision matrix estimation, we modify the algorithm with a
similar two-stage strategy as used in the modified GraHTP such that
it can handle the eigenvalue bounding constraint in addition to the
sparsity constraint. In the work of~\citet{Yuan-2013}, FBS has
already been applied to sparse precision matrix estimation. Also, we
compare GraHTP with GLasso (Graphical Lasso) which is a
representative convex method for $\ell_1$-penalized log-determinant
program~\citep{Friedman-Glasso-2008}. The quality of precision
matrix estimation is measured by its distance to the truth in
Frobenius norm and the support recovery F-score. The larger the
F-score, the better the support recovery performance. The numerical
values over $10^{-3}$ in magnitude are considered to be nonzero.

\begin{figure}[h!]
\begin{center}
\includegraphics[width=3in]{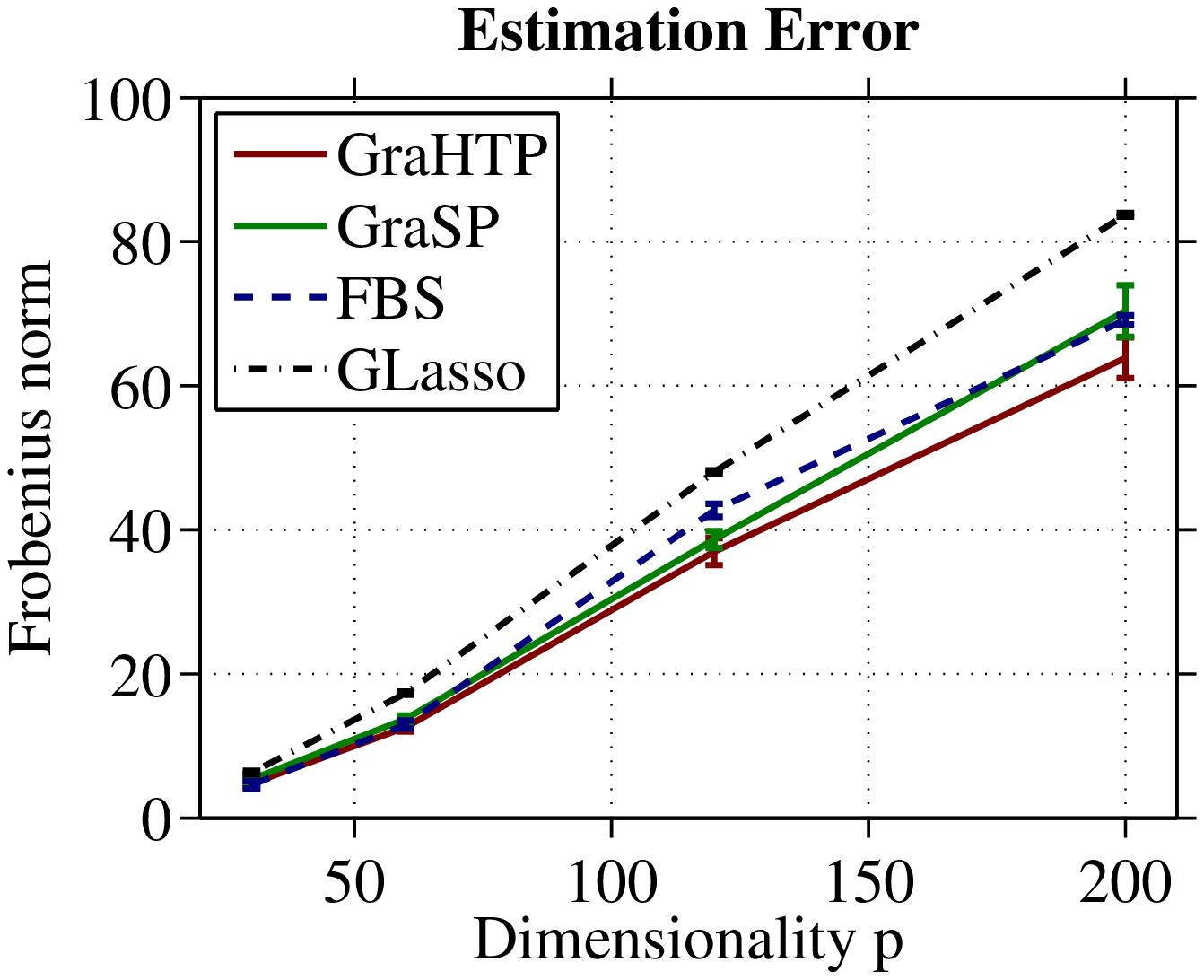}\label{fig:model3_frob}
\includegraphics[width=3in]{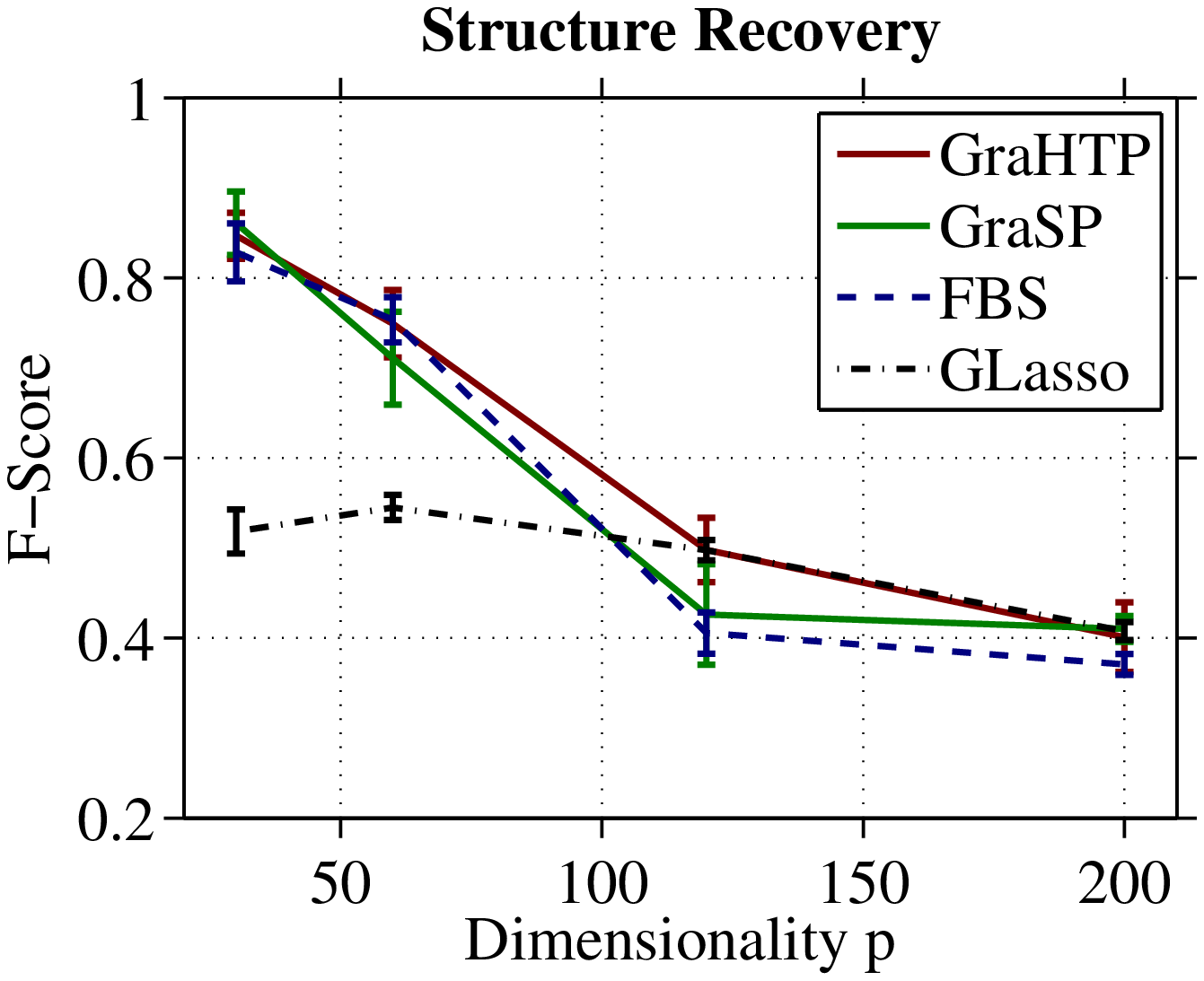}\label{fig:model3_fscore}
\includegraphics[width=3in]{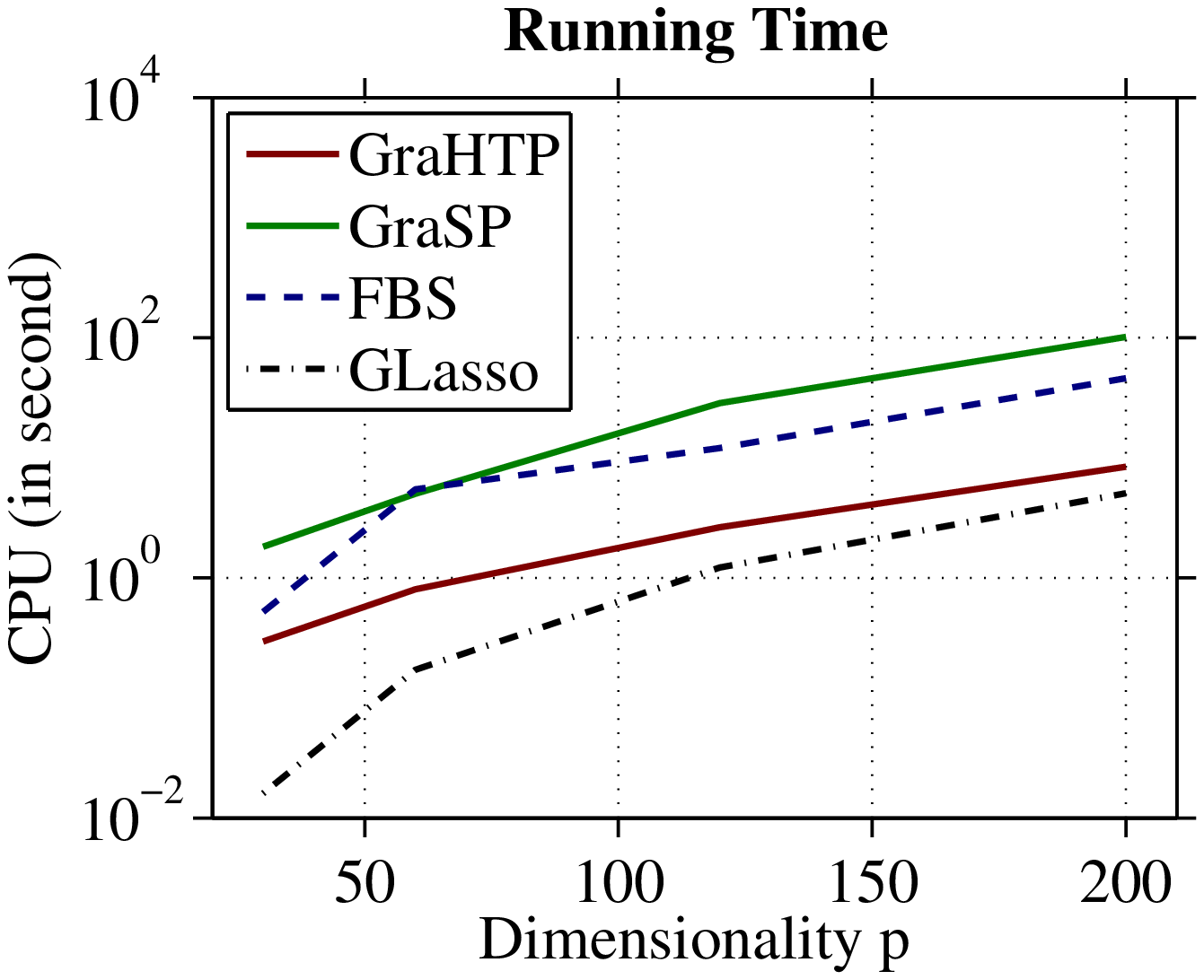}\label{fig:model3_cpu}
\end{center}
\vspace{-0.3in}
\caption{Simulated data: comparison of Matrix Frobenius norm loss,
support recovery F-score and CPU running time of the considered
methods.}\label{fig:model3}
\end{figure}

Figure~\ref{fig:model3} compares the matrix error in Frobenius norm,
sparse recovery F-score and CPU running time achieved by each of the
considered algorithms for different $p$. The results show that
GraHTP performs favorably in terms of estimation error and support
recovery accuracy. We note that the standard errors of GraHTP is
relatively larger than Glasso. This is as expected since GraHTP
approximately solves a non-convex problem via greedy selection at
each iteration; the procedure is less stable than convex methods
such as GLasso. Similar phenomenon of instability is observed for
GraSP and FBS. Figure~\ref{fig:model3_cpu} shows the computational
time of the considered algorithms. We observed that GLasso, as a
convex solver, is computationally superior to the three considered
greedy selection methods. Although inferior to GLasso, GraHTP is computationally
much more efficient than GraSP and FBS.

To visually inspect the support recovery performance of the
considered algorithms, we show in Figure~\ref{fig:synthetic_heatmap}
the heatmaps corresponding to the percentage of each matrix entry
being identified as a nonzero element with $p=60$. Visual inspection
on these heatmaps shows that the three greedy selection methods,
GraHTP, GraSP, FBS, tend to be sparser than GLasso. Similar phenomenon is
observed in our experiments with other values of $p$.

\begin{figure}[h!]
\begin{center} \subfigure[Ground
truth]{\includegraphics[width=50mm]{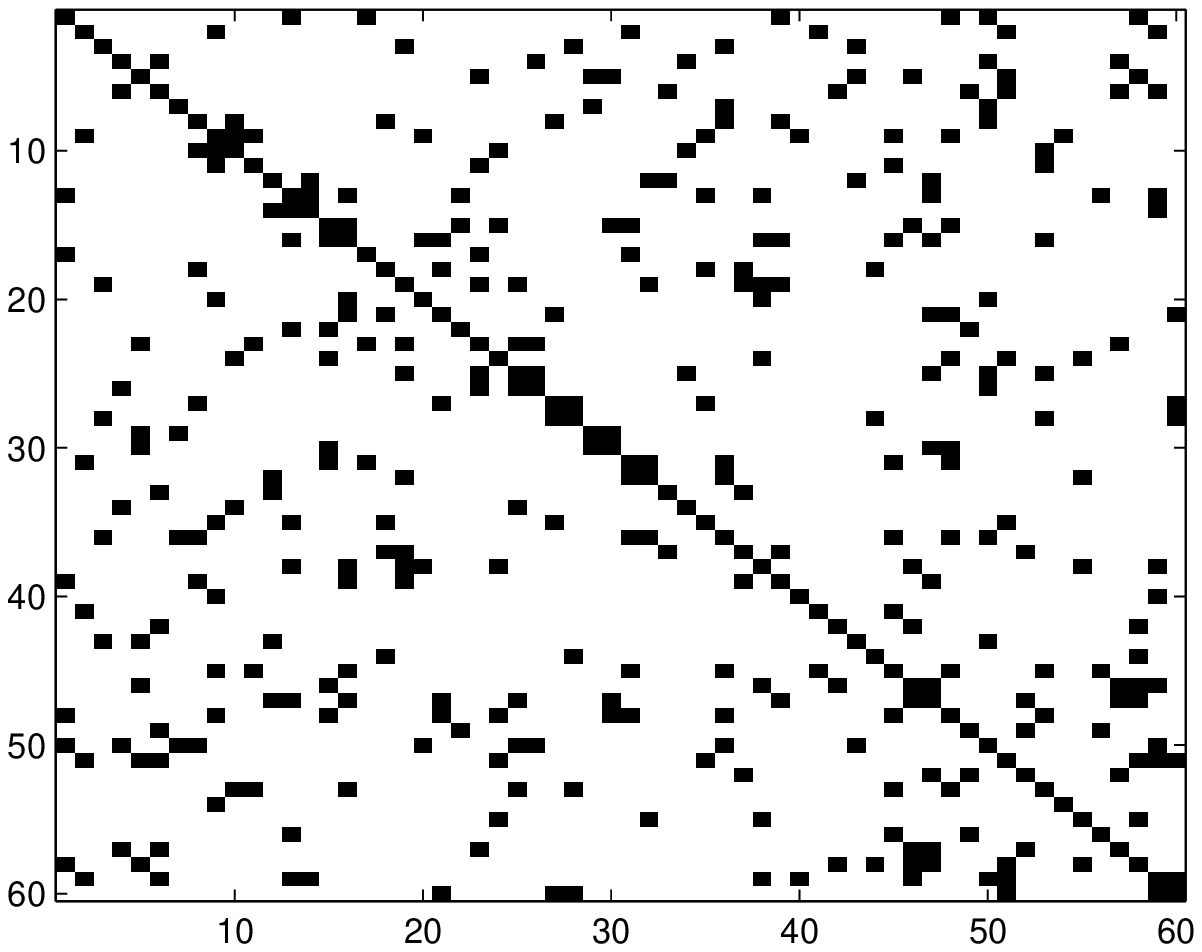}\label{fig:model_3_p_60_gnd}}
\subfigure[GraHTP]{\includegraphics[width=50mm]{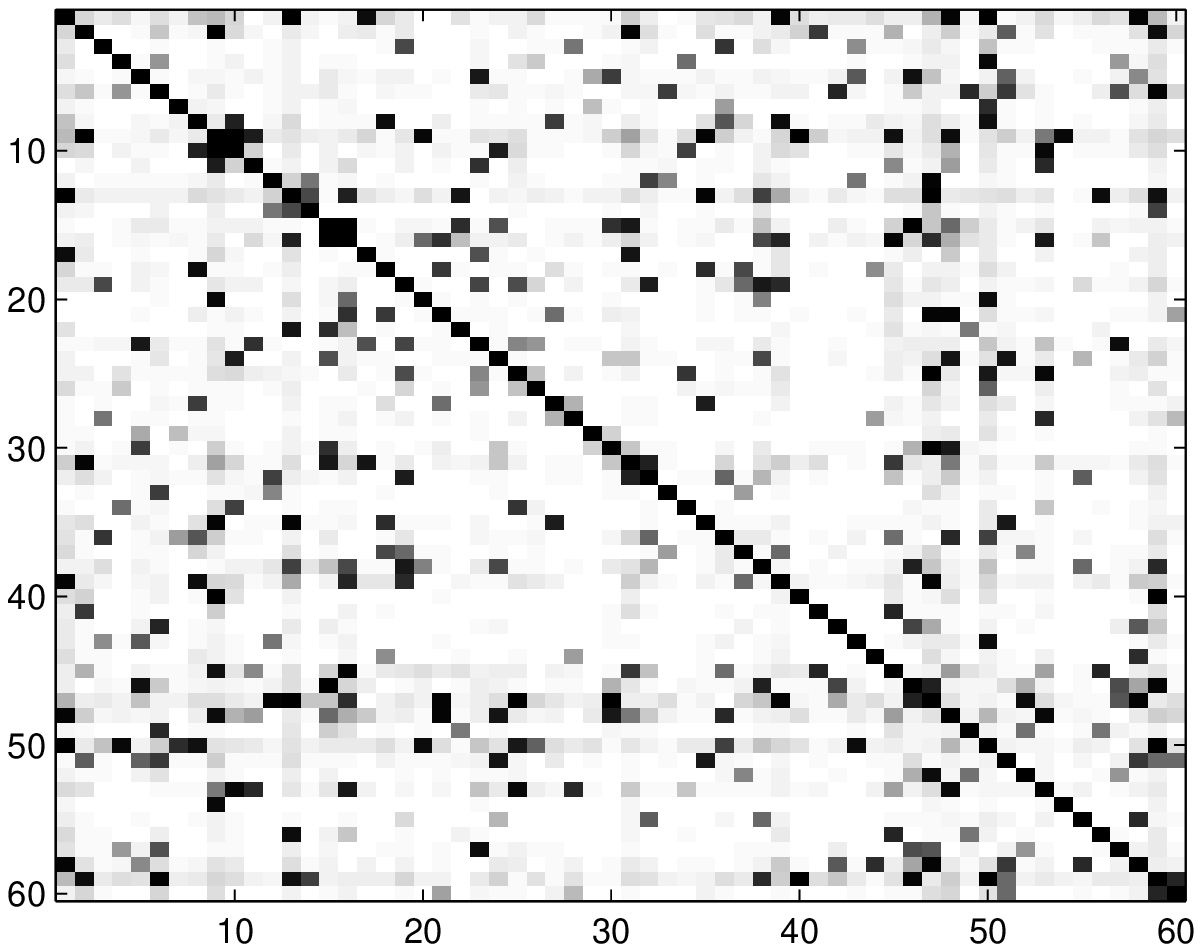}\label{fig:model_3_heatmap_img_p_60_NTGP}}
\subfigure[GraSP]{\includegraphics[width=50mm]{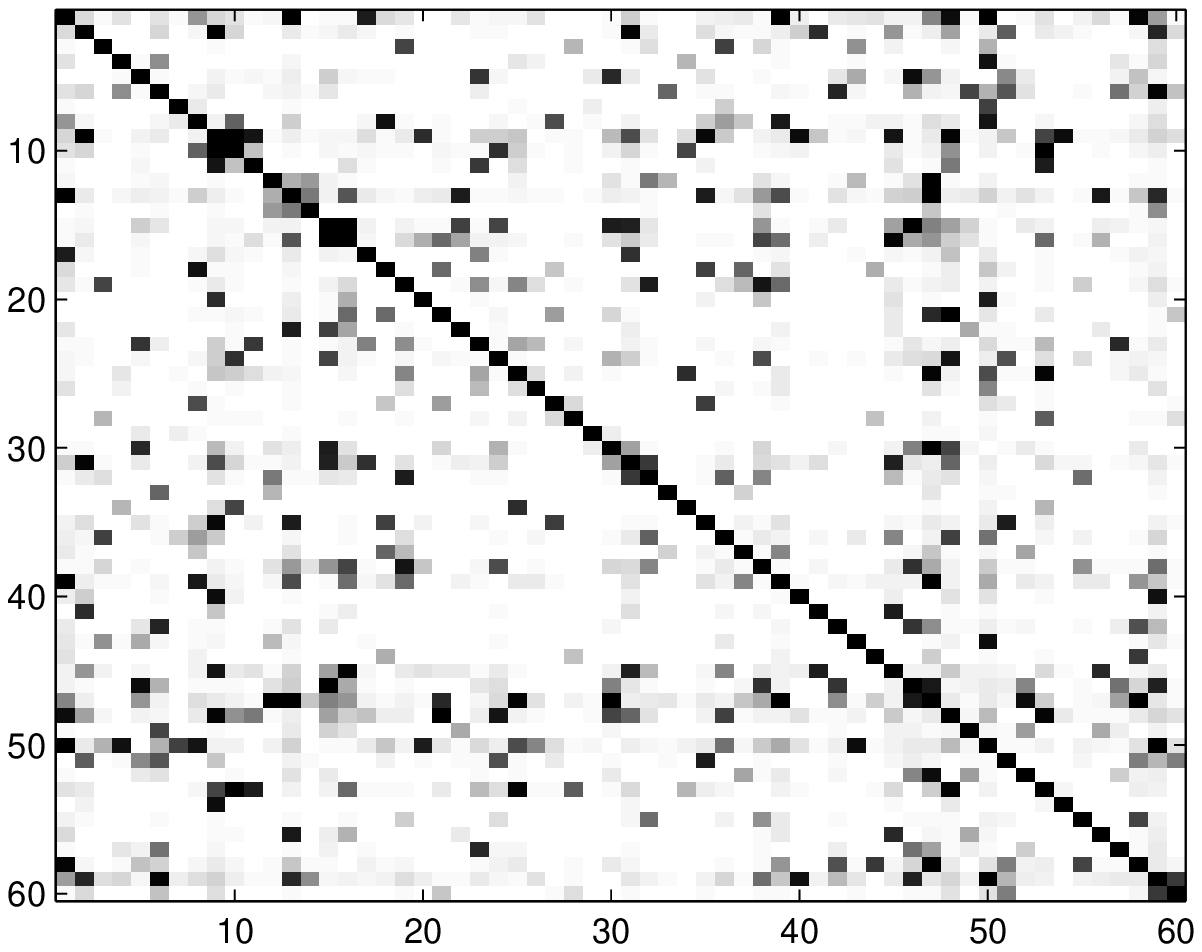}\label{fig:model_3_heatmap_img_p_60_GraSP}}
\subfigure[FBS]{\includegraphics[width=50mm]{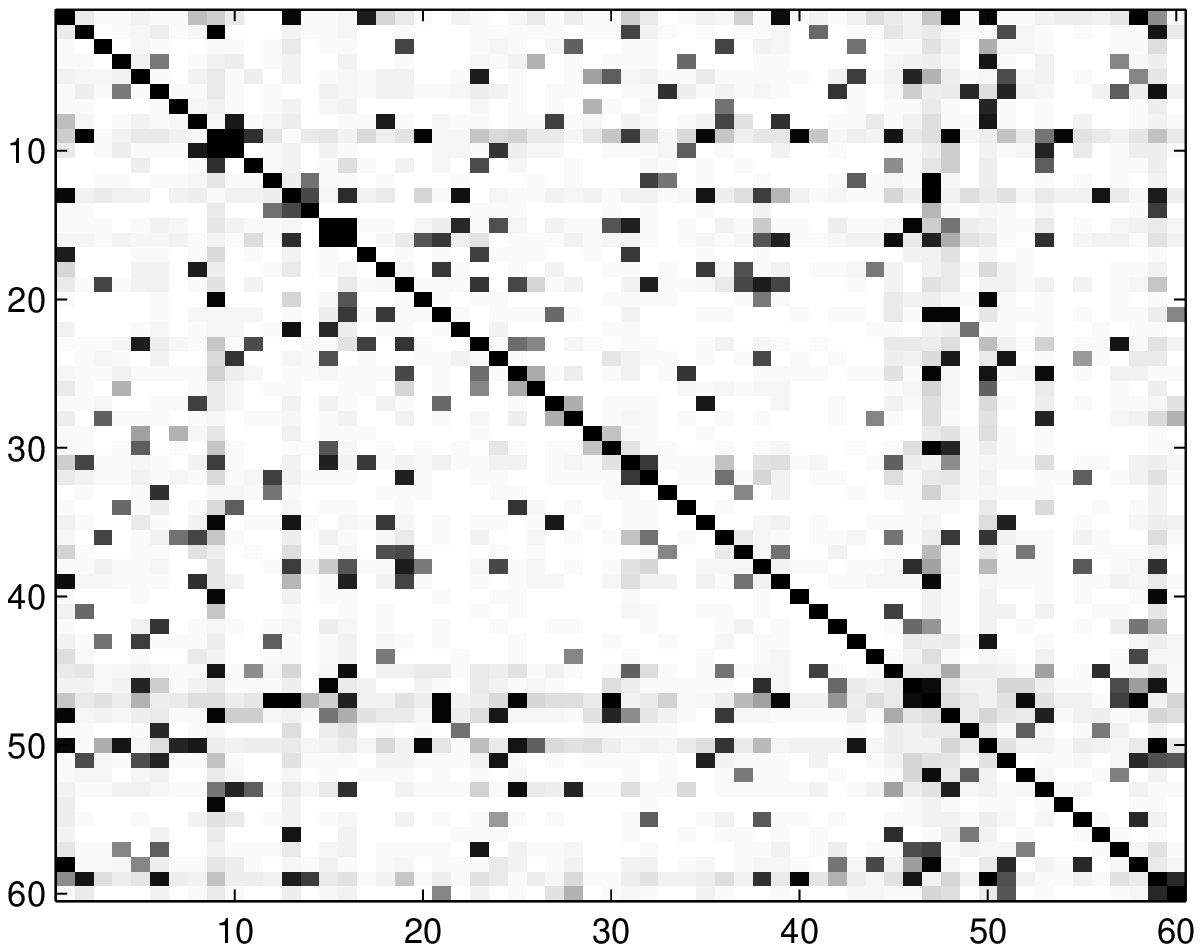}\label{fig:model_3_heatmap_img_p_60_FBS}}
\subfigure[GLasso]{\includegraphics[width=50mm]{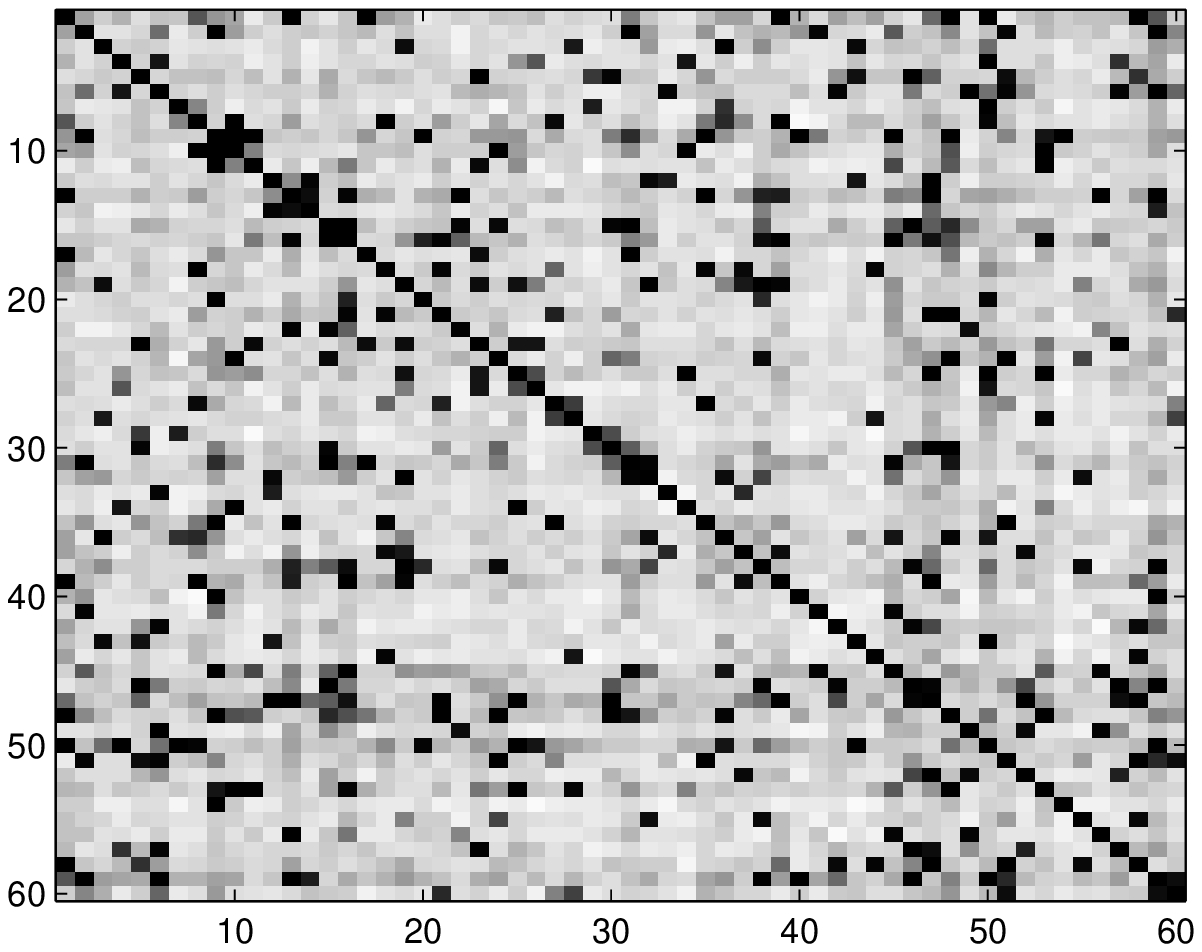}\label{fig:model_3_heatmap_img_p_60_GLasso}}
\end{center}
\vspace{-0.3in}
\caption{Simulation data with $p=60$: heatmaps of the frequency of each precision matrix entry being identified as
nonzeros out of 100 replications. \label{fig:synthetic_heatmap}}\hspace{-0.1in}
\end{figure}

%\newpage

\subsubsection{Real Data}

We consider the task of LDA (linear discriminant analysis) classification of tumors using the
\textsf{breast cancer} dataset~\citep{Hess-breastcancer-2006}. This dataset consists
of 133 subjects, each of which is associated with 22,283 gene expression levels. Among these subjects, 34
are with pathological complete response (pCR) and 99 are
with residual disease (RD). The pCR subjects are considered to have
a high chance of cancer free survival in the long term. Based on the
estimated precision matrix of the gene expression levels,
we apply LDA to predict whether a subject can achieve the pCR
state or the RD state.

In our experiment, we follow the same  protocol used
by~\citep{Cai-CLIME-2011} as well as references therein. For the sake of readers, we briefly review this experimental setup. The data are
randomly divided into the training and testing sets. In each random division, 5 pCR subjects
and 16 RD subjects are randomly selected to constitute the testing data, and the remaining subjects
form the training set with size $n=112$. By using two-sample $t$ test, $p=113$ most
significant genes are selected as covariates. Following the
LDA framework, we assume that the normalized gene expression data are
normally distributed as $\mathcal {N}(\mu_l, \bar\Sigma)$, where the
two classes are assumed to have the same covariance matrix,
$\bar\Sigma$, but different means, $\mu_l$, $l = 1$ for pCR state and $l =
2$ for RD state. Given a testing data sample $x$, we calculate its LDA scores, $\delta_l(x) = x^\top\hat\Omega {\hat\mu}_l - \frac{1}{2}
{\hat\mu}_l^\top\hat\Omega {\hat\mu}_l + \log {\hat\pi}_l$, $l=1,2$, using the precision matrix $\hat\Omega$ estimated by
the considered methods. Here $\hat\mu_l = (1/n_l)\sum_{i \in \text{class}_l} x_i$ is the
within-class mean in the training set and ${\hat\pi}_l = n_l /n$
is the proportion of class $l$ subjects in the training set. The
classification rule is set as $\hat{l}(x) = \argmax_{l=1,2}
\delta_l (x)$. Clearly, the classification performance is directly affected by the
estimation quality of $\hat{\Omega}$. Hence, we assess the precision matrix estimation performance on the testing data and compare (modified) GraHTP with GraSP and FBS. We also compare
GraHTP with GLasso (Graphical Lasso)~\citep{Friedman-Glasso-2008}. We use a 6-fold
cross-validation on the training data for tuning $k$. We repeat the
experiment 100 times.\\

\noindent\textbf{Results.} To compare the classification
performance, we use specificity, sensitivity (or recall), and
Mathews correlation coefficient (MCC) criteria as
in~\citep{Cai-CLIME-2011}:
\begin{align}\notag
&\text{Specificity} = \frac{\text{TN}}{\text{TN} + \text{FP}},\hspace{0.3in}
\text{Sensitivity} = \frac{\text{TP}}{\text{TP} +
\text{FN}},\\\notag
&\text{MCC} = \frac{\text{TP} \times \text{TN} -
\text{FP}\times\text{FN}}{\sqrt{(\text{TP} + \text{FP})(\text{TP} +
\text{FN})(\text{TN} + \text{FP})(\text{TN} + \text{FN})}},
\end{align}
where TP and TN stand for true positives (pCR) and true negatives
(RD), respectively, and FP and FN stand for false positives/
negatives, respectively. The larger the criterion value, the better the
classification performance. Since one can adjust decision threshold
in any specific algorithm to trade-off specificity and sensitivity
(increase one while reduce the other), the MCC is more meaningful as
a single performance metric.

\begin{table}[htb]
\begin{center}
\caption{\textsf{Brease cancer} data: comparison of average (std) classification accuracy and average CPU running time over 100 replications.
\label{tab:breast_cancer_results}}
\begin{tabular}{c c c c c}
\hline
Methods & Specificity & Sensitivity & MCC & CPU Time (sec.)\\
\hline
GraHTP     & 0.77 (0.11) & 0.77 (0.19) & \textbf{0.49} (0.19) & 1.92\\
GraSP    & 0.73 (0.10) & \textbf{0.78} (0.18) & 0.45 (0.17) & 4.06 \\
FBS     &  0.78 (0.11) &  0.74 (0.18)  &  0.48 (0.19) & 8.73 \\
GLasso   & \textbf{0.81} (0.11) & 0.64 (0.21) & 0.45 (0.19) & \textbf{1.19} \\
\hline
\end{tabular}
\end{center}\vspace{-0.1in}
\end{table}

Table~\ref{tab:breast_cancer_results}
reports the averages and standard deviations, in the parentheses, of the three classification criteria over 100 replications. It can be observed that GraHTP
is quite competitive to the leading methods in terms of the three
metrics. The averages of CPU running time (in seconds) of the considered
methods are listed in the last column of
Table~\ref{tab:breast_cancer_results}.
Figure~\ref{fig:breast_cancer} shows the evolving curves of
log-determinant loss verses number of iterations.
We observed that on this data, GraHTP converges much faster than GraSP and FBS.
Note that we did not draw the curve of GLasso in
Figure~\ref{fig:breast_cancer} because its objective function is
different from that of the
problem~\eqref{prob:SISC_card_constraint_bounds}.

\begin{figure}[h!]
\begin{center}
\includegraphics[width=3in]{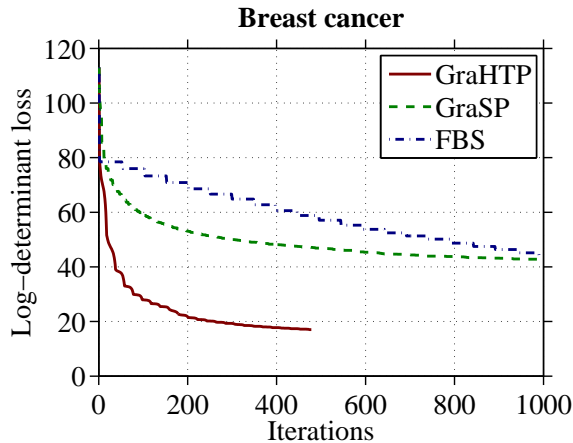}
\end{center}
\vspace{-0.3in} \caption{\textsf{Brease cancer} data: log-determinant loss
convergence curves of the considered methods.
\label{fig:breast_cancer}}\hspace{-0.1in}
\end{figure}

\section{Conclusion}
\label{sect:conclusion}

In this paper, we propose GraHTP as a generalization of HTP from compressive sensing to the generic problem of sparsity-constrained optimization. The main idea is to force the gradient descent iteration to be sparse via hard threshloding. Theoretically, we prove that under mild conditions, GraHTP converges geometrically in finite steps of iteration and its estimation error is controlled by the restricted norm of gradient at the target sparse solution. We also propose and analyze the FGraHTP algorithm as a fast variant of GraHTP without the debiasing step. Empirically, we compare GraHTP and FGraHTP with several representative greedy selection methods when applied to sparse logistic regression and sparse precision matrix estimation tasks. Our theoretical results and empirical evidences show that simply combing gradient descent with hard threshloding, with or without debiasing, leads to  efficient and accurate computational procedures for sparsity-constrained optimization problems.

\section*{Acknowledgment}

Xiao-Tong Yuan was a postdoctoral research associate supported by
NSF-DMS 0808864 and NSF-EAGER 1249316. Ping Li is supported by
ONR-N00014-13-1-0764, AFOSR-FA9550-13-1-0137, and NSF-BigData
1250914. Tong Zhang is supported by NSF-IIS 1016061, NSF-DMS 1007527, and NSF-IIS 1250985.

\appendix

\section{Technical Proofs}

\subsection{Proof of Lemma~\ref{lemma:strong_smooth}}\label{append:proof_lemma_strong_smooth}
\begin{proof}
\textbf{Part (a)}: The first inequality follows from the triangle
inequality. The second inequality can be derived by combining the
first one and the integration $f(x) - f(y) - \langle \nabla
f(y),x-y\rangle = \int_{0}^1 \langle \nabla f(y + \tau (x-y)) -
\nabla f(y), x-y \rangle d\tau$.

\textbf{Part (b)}: By adding two copies of the
inequality~\eqref{inequat:strong_smooth_convex} with $x$ and $y$
interchanged and using the Theorem 2.1.5 in~\citep{Nesterov-2004},
we know that
\[
(x-y)^\top (\nabla f(x) - \nabla f(y)) \ge m_{s}\|x-y\|^2, \quad
\|\nabla_F f(x) - \nabla_F f(y)\| \le M_{s}\|x-y\|.
\]
For any $\zeta >0$ we have
\[
\|x - y - \zeta \nabla_F f(x) + \zeta \nabla_F f(y)\|^2 \le (1- 2
\zeta m_s + \zeta^2 M_s^2)\|x-y\|^2.
\]
If $\zeta < 2m_s/M^2_s$, then $\rho_s = \sqrt{1-2\zeta m_s + \zeta
^2 M_s^2}<1$. This proves the desired result.
\end{proof}

\subsection{Proof of Theorem~\ref{thrm:convergence}}\label{append:proof_thrm_convergence}
\begin{proof}
We first prove the finite iteration guarantee of GraHTP. Let us
consider the vector $\tilde x^{(t)}_k$ which is the restriction of
$\tilde x^{(t)}$ on $F^{(t)}$. According to the definition of
$x^{(t)}$ we have $f(x^{(t)}) \le f(\tilde x^{(t)}_k)$. It follows
that
\begin{eqnarray}\label{inequat:f_t-f_t-1}
f(x^{(t)}) - f(x^{(t-1)}) &\le& f(\tilde x^{(t)}_k) - f(x^{(t-1)})
\nonumber \\
&\le& \langle \nabla f(x^{(t-1)}) , \tilde x^{(t)}_k - x^{(t-1)}
\rangle +
\frac{1+\rho_{2k}}{2\zeta} \|\tilde x^{(t)}_k -x^{(t-1)}\|^2 \nonumber \\
&\le& -\frac{1}{2\eta} \|\tilde x^{(t)}_k -x^{(t-1)}\|^2  +
\frac{1+\rho_{2k}}{2\zeta} \|\tilde x^{(t)}_k -x^{(t-1)}\|^2 \nonumber \\
&=& -\frac{\zeta - \eta (1+\rho_{2k})}{2\zeta\eta}\|\tilde x^{(t)}_k
-x^{(t-1)}\|^2,
\end{eqnarray}
where the second inequality follows from
Lemma~\ref{lemma:strong_smooth} and the third inequality follows from the
fact that $\tilde x^{(t)}_k$ is a better $k$-sparse approximation to
$\tilde x^{(t)}$ than $x^{(t-1)}$ so that $\|\tilde x^{(t)}_k -
\tilde x^{(t)}\|=\|\tilde x^{(t)}_k - x^{(t-1)} + \eta \nabla
f(x^{(t-1)})\|^2 \le\|x^{(t-1)}- x^{(t-1)} + \eta \nabla
f(x^{(t-1)})\|^2 = \|\eta \nabla f(x^{(t-1)})\|^2$, which implies
$2\eta \langle \nabla f(x^{(t-1)}) , \tilde x^{(t)}_k -x^{(t-1)}
\rangle  \le - \|\tilde x^{(t)}_k - x^{(t-1)}\|^2$. Since $\eta
(1+\rho_{2k}) < \zeta$, it follows that the sequence
$\{f(x^{(t)})\}$ is nonincreasing, hence it is convergent. Since it
is also eventually periodic, it must be eventually constant. In view
of~\eqref{inequat:f_t-f_t-1}, we deduce that $\tilde
x^{(t)}_k=x^{(t-1)}$, and in particular that $F^{(t)}=F^{(t-1)}$,
for $t$ large enough. This implies that $x^{(t)} = x^{(t-1)}$ for
$t$ large enough, which implies the desired result.

By noting that $x^{(t)}=\tilde x^{(t)}_k$ and from the
inequality~\eqref{inequat:f_t-f_t-1}, we immediately establish the
convergence of the sequence $\{f(x^{(t)})\}$ defined by FGraHTP.
\end{proof}

\subsection{Proof of Theorem~\ref{thrm:recovery_GraHTP}}\label{append:proof_thrm_recovery_grahtp}

\begin{proof}
\textbf{Part (a)}: The first step of the proof is a consequence of
the debiasing step S3. Since $x^{(t)}$ is the minimum of $f(x)$
restricted over the supporting set $F^{(t)}$, we have $\langle\nabla
f(x^{(t)}), z\rangle=0$ whenever $\supp(z) \subseteq F^{(t)}$. Let
$\bar F = \supp (\bar x)$ and $F = \bar F \cup F^{(t)}\cup
F^{(t-1)}$. It follows that
\begin{eqnarray}
\|(x^{(t)} - \bar x)_{F^{(t)}}\|^2 &= & \langle x^{(t)} - \bar x ,
(x^{(t)} - \bar x)_{F^{(t)}}\rangle \nonumber \\
&=&  \langle x^{(t)} - \bar x -\zeta \nabla_{F^{(t)}} f(x^{(t)}) +
\zeta\nabla_{F^{(t)}} f(\bar x), (x^{(t)} - \bar x)_{F^{(t)}}\rangle
- \zeta \langle \nabla_{F^{(t)}} f(\bar x) ,
(x^{(t)} - \bar x)_{F^{(t)}}\rangle\nonumber \\
&\le& \rho_s \|x^{(t)} - \bar x\|\|(x^{(t)} - \bar x)_{F^{(t)}}\|+
\zeta \|\nabla_k f(\bar x)\|\|(x^{(t)} - \bar x)_{F^{(t)}}\|
\nonumber,
\end{eqnarray}
where the last inequality is from Condition $C(s,\zeta,\rho_s)$,
$\rho_{k+\bar k} \le \rho_s$ and $\|\nabla_{F^{(t)}} f(\bar x)\| \le
\|\nabla_k f(\bar x)\|$. After simplification, we have $ \|(x^{(t)}
- \bar x)_{F^{(t)}}\| \le \rho_s \|x^{(t)} - \bar x\| + \zeta
\|\nabla _k f(\bar x)\|$. It follows that
\begin{eqnarray}
\|x^{(t)} - \bar x\| \le \|(x^{(t)} - \bar x)_{F^{(t)}}\| +
\|(x^{(t)} - \bar x)_{\overline{F^{(t)}}}\| \le \rho_s \|x^{(t)} -
\bar x\| + \zeta \|\nabla _k f(\bar x)\| + \|(x^{(t)} - \bar
x)_{\overline{F^{(t)}}}\|. \nonumber
\end{eqnarray}
After rearrangement we obtain
\begin{equation}\label{inequat:recovery_grahtp_1}
\|x^{(t)} - \bar x\| \le \frac{\|(x^{(t)} - \bar
x)_{\overline{F^{(t)}}}\|}{1-\rho_s } + \frac{\zeta\|\nabla _k
f(\bar x)\|}{1-\rho_s}.
\end{equation}

The second step of the proof is a consequence of steps S1 and S2. We
notice that
\[
\|(x^{(t-1)} - \eta \nabla f(x^{(t-1)}))_{\bar F}\| \le \|(x^{(t-1)}
- \eta \nabla f(x^{(t-1)}))_{F^{(t)}}\|.
\]
By eliminating the contribution on $\bar F \cap F^{(t)}$, we derive
\[
\|(x^{(t-1)} - \eta \nabla f(x^{(t-1)}))_{\bar F\backslash F^{(t)}}
\| \le \|(x^{(t-1)} - \eta \nabla f(x^{(t-1)}))_{F^{(t)}\backslash
\bar F}\|.
\]
For the right-hand side, we have
\[
\|(x^{(t-1)} - \eta \nabla f(x^{(t-1)}))_{F^{(t)}\backslash \bar
F}\| \le \|(x^{(t-1)} - \bar x - \eta \nabla f(x^{(t-1)}) + \eta
\nabla f(\bar x) )_{F^{(t)}\backslash \bar F}\| + \eta \|\nabla_k
f(\bar x)\|.
\]
As for the left-hand side, we have
\begin{eqnarray}
&&\|(x^{(t-1)} - \eta \nabla f(x^{(t-1)}))_{\bar
F\backslash F^{(t)}}\| \nonumber \\
&\ge& \|(x^{(t-1)} - \bar x - \eta \nabla f(x^{(t-1)}) + \eta \nabla
f(\bar x) )_{\bar F \backslash F^{(t)}} + (x^{(t)} - \bar
x)_{\overline{F^{(t)}}}\| - \eta \|\nabla_k f(\bar x)\| \nonumber\\
&\ge& \|(x^{(t)} - \bar x)_{\overline{F^{(t)}}}\| - \|(x^{(t-1)} -
\bar x - \eta \nabla f(x^{(t-1)}) + \eta \nabla f(\bar x) )_{\bar F
\backslash F^{(t)}}\| - \eta \|\nabla_k f(\bar x)\|.\nonumber
\end{eqnarray}
With $\bar F \Delta F^{(t)}$ denoting the symmetric difference of
the set $\bar F$ and $F^{(t)}$, it follows that
\begin{eqnarray}\label{inequat:recovery_grahtp_2}
&&\|(x^{(t)} - \bar x)_{\overline{F^{(t)}}}\| \nonumber \\
&\le& \sqrt{2}\|(x^{(t-1)} - \bar x - \eta \nabla f(x^{(t-1)}) +
\eta \nabla f(\bar x))_{\bar F \Delta F^{(t)}}\| + 2\eta \|\nabla_k
f(\bar x)\|\nonumber \\
&\le& \sqrt{2} \|x^{(t-1)} - \bar x - \eta \nabla_F f(x^{(t-1)}) +
\eta \nabla_F f(\bar x)\| + 2\eta \|\nabla_k
f(\bar x)\|\nonumber \\
&\le& \sqrt{2} \|x^{(t-1)} - \bar x - \zeta \nabla_F f(x^{(t-1)}) + \zeta \nabla_F f(\bar x)\| + \sqrt{2} (\zeta -\eta)\|\nabla_F f(x^{(t-1)}) - \nabla_F f(\bar x)\|+ 2\eta \|\nabla_k f(\bar x)\|\nonumber \\
&\le& \sqrt{2}(1-\eta/\zeta + (2 - \eta/\zeta)\rho_s) \|x^{(t-1)} -
\bar x\| + 2\eta \|\nabla_k f(\bar x)\|,
\end{eqnarray}
where the last inequality follows from Condition $C(s,\zeta,\rho_s)$,
$\eta< \zeta$ and Lemma~\ref{lemma:strong_smooth}. As a final step,
we put~\eqref{inequat:recovery_grahtp_1}
and~\eqref{inequat:recovery_grahtp_2} together to obtain
\[
\|x^{(t)} - \bar x\| \le \frac{\sqrt{2}(1-\eta/\zeta +
(2-\eta/\zeta)\rho_s)}{1-\rho_s } \|x^{(t-1)} - \bar x\| +
\frac{(2\eta+\zeta)\|\nabla _k f(\bar x)\|}{1-\rho_s}
\]
Since $\mu_1 = \sqrt{2}(1-\eta/\zeta +
(2-\eta/\zeta)\rho_s)/(1-\rho_s)<1$, by recursively applying the
above inequality we obtain the desired inequality in part (a).\\

\textbf{Part (b):} Recall that $F^{(t)} = \supp(x^{(t)},k)$ and
$F=F^{(t-1)} \cup F^{(t)} \cup \supp(\bar{x})$. Consider the
following vector
\[ y = x^{(t-1)} - \eta \nabla_F f(x^{(t-1)}).
\]
By using triangular inequality we have
\begin{eqnarray*}
\|y -\bar x\| &=& \|x^{(t-1)} - \eta \nabla_F f(x^{(t-1)}) - \bar
x\| \nonumber \\
&\le& \|x^{(t-1)} -\bar x -
\eta \nabla_F f(x^{(t-1)}) + \eta \nabla_F f(\bar x)\|+ \eta\|\nabla_F f(\bar x)\| \\
&\le& (1-\eta/\zeta + (2-\eta/\zeta)\rho_s)\|x^{(t-1)}-\bar x\| +
\eta\|\nabla_s f(\bar x)\|,
\end{eqnarray*}
where the last inequality follows from Condition $C(s,\zeta,\rho_s)$, $\eta < \zeta$
and $\|\nabla_F f(\bar x)\| \le \|\nabla_s f(\bar x)\|$. For
FGraHTP, we note that $x^{(t)} = \tilde x^{(t)}_k = y_k$, and thus
$\|x^{(t)} - \bar x\| \le \|x^{(t)} - y\|+ \|y -\bar x\| \le 2\|y
-\bar x\| $. It follows that
\begin{eqnarray*}
\|x^{(t)} - \bar x\| \le  2(1-\eta/\zeta + (2-\eta/\zeta)\rho_s)
\|x^{(t-1)} - \bar x\| + 2 \eta\|\nabla_s f(\bar x)\|.
\end{eqnarray*}
Since $\mu_2 = 2(1-\eta/\zeta + (2-\eta/\zeta)\rho_s)<1$, by
recursively applying the above inequality we obtain the desired
inequality in part (b).
\end{proof}

\subsection{Proof of Proposition~\ref{prop:logistic_assump1}}
\label{append:proof_prop_logistic_assump1}

\begin{proof}
Obviously, $f(w)$ is $\lambda$-strongly convex. Consider an index
set $F$ with cardinality $|F|\le s$ and all $w,w'$ with
$\supp(w)\cup \supp(w')\subseteq F$. Since $\sigma(z)$ is Lipschitz
continuous with constant $1$, we have
\begin{eqnarray}
|[a(w)]_i - [a(w')]_i)| &=& 2|\sigma(2v^{(i)} w^\top u^{(i)}) -
\sigma(2v^{(i)} w'^\top u^{(i)})| \nonumber \\
&\le& 4|(w-w')^\top v^{(i)} u^{(i)}| \nonumber\\
&\le& 4\|(u^{(i)})_F\|\|w-w'\| \nonumber \\
&\le& 4R_s \|w-w'\|, \nonumber
\end{eqnarray}
which implies
\[
\|a(w) - a(w')\|_\infty \le 4R_s \|w-w'\|.
\]
Therefore we have
\begin{eqnarray}
\|\nabla_F f(w) - \nabla_F f(w')\| &\le& \frac{1}{n}
\|U_{F\bullet} (a(w) - a(w')) \| + \lambda\|w-w'\| \nonumber \\
&\le& \frac{1}{n}
\|U_{F\bullet} (a(w) - a(w')) \|_1 + \lambda\|w-w'\| \nonumber \\
&\le& \frac{1}{n} |U_{F\bullet}|_1\|a(w) - a(w')\|_\infty + \lambda\|w - w'\| \nonumber \\
&\le& (4 \sqrt{s} R_s^2 + \lambda) \|w-w'\|, \nonumber
\end{eqnarray}
where the second ``$\le$'' follows $\|x\|\le \|x\|_1$, the third
``$\le$'' follows from $\|Ax\|_1 \le |A|_1\|x\|_\infty$, and last
``$\le$'' follows from $|U_{F\bullet}|_1 \le n\sqrt{s}\max_i
\|[u^{(i)}]_F\| \le n\sqrt{s} R_s$. Therefore $f$ is $(4 \sqrt{s}
R_s^2 + \lambda)$-strongly smooth. The desired result follows
directly from  Part(b) of Lemma~\ref{lemma:strong_smooth}.
\end{proof}

\subsection{Proof of Proposition~\ref{prop_logsitic}}
\label{append:proof_prop_logsitic}

\begin{proof}
For any index set $F$ with $|F|\le s$, we can deduce
\begin{eqnarray}\label{proof:prop_logistic_1}
\|\nabla_F f(\bar w)\| &\le& \|[\nabla l(\bar w)]_F \| +
\lambda\|\bar w_F\| \le \sqrt{s}\|\nabla l(\bar w)\|_\infty +
\lambda \|\bar w_s\|.
\end{eqnarray}
We next bound the term $\|\nabla l(\bar w)\|_\infty$.
From~\eqref{equat:derivatives} we have
\begin{eqnarray}
\left|\frac{\partial l}{\partial [\bar w]_j}\right| &=&
\left|\frac{1}{n}\sum_{i=1}^n -v^{(i)} [u^{(i)}]_j +
\mathbb{E}_{v}[v[u^{(i)}]_j \mid
u^{(i)}]\right| \nonumber \\
&\le& \left|\frac{1}{n}\sum_{i=1}^n  v^{(i)} [u^{(i)}]_j -
\mathbb{E}[v[u]_j] \right| + \left| \frac{1}{n}\sum_{i=1}^n
\mathbb{E}_{v}[v[u^{(i)}]_j \mid u^{(i)}] - \mathbb{E}[v[u]_j]
\right| \nonumber,
\end{eqnarray}
where $\mathbb{E}[]$ is taken over the
distribution~\eqref{equat:joint_distr}. Therefore, for any
$\varepsilon >0 $,
\begin{eqnarray}
\mathbb{P}\left(\left|\frac{\partial l}{\partial [\bar w]_j}\right|
> \varepsilon \right) &\le&
\mathbb{P}\left(\left|\frac{1}{n}\sum_{i=1}^n  v^{(i)} [u^{(i)}]_j -
\mathbb{E}[v[u]_j] \right| > \frac{\varepsilon}{2}\right) \nonumber \\
&& + \mathbb{P}\left(\left| \frac{1}{n}\sum_{i=1}^n
\mathbb{E}_{v}[v[u^{(i)}]_j \mid u^{(i)}] - \mathbb{E}[v[u]_j]
\right|> \frac{\varepsilon}{2} \right) \nonumber \\
&\le&  4 \exp\left\{-\frac{n\varepsilon^2}{8\sigma^2}\right\}
\nonumber,
\end{eqnarray}
where the last ``$\le$'' follows from the large deviation inequality
of sub-Gaussian random variables which is standard~\citep[see,
e.g.,][]{Vershynin-SubExp-2011}. By the union  bound we have
\begin{eqnarray}
\mathbb{P}(\|\nabla l(\bar w)\|_\infty > \varepsilon) &\le&
4p\exp\left\{-\frac{n\varepsilon^2}{8\sigma^2}\right\} \nonumber.
\end{eqnarray}
By letting  $\varepsilon = 4\sigma\sqrt{\ln p/n}$, we know that
with probability at least $1-4p^{-1}$,
\[
\|\nabla l(\bar w)\|_\infty \le 4\sigma \sqrt{\ln p/n}.
\]
Combing the above inequality with~\eqref{proof:prop_logistic_1} yields
 the desired result.
\end{proof}

\section{Solving Subproblem~\eqref{subprob:omega_F} via ADM}
\label{append:adm}

In this appendix section, we provide our implementation details of
ADM for solving the subproblem~\eqref{subprob:omega_F}. By
introducing an auxiliary variable $\Theta \in \mathbb{R}^{p \times
p}$, the problem~\eqref{subprob:omega_F} is obviously equivalent to
the following problem:
\begin{equation}\label{prob:SISC_card_constraint_adm}
\min_{\alpha I \preceq \Omega \preceq \beta I} L(\Omega), \ \ \ \st
\Omega = \Theta, \supp(\Theta) \subseteq F.
\end{equation}
Then, the Augmented Lagrangian function
of~\eqref{prob:SISC_card_constraint_adm} is
\begin{equation}
J(\Omega,\Theta,\Gamma) := L(\Omega) - \langle \Gamma, \Omega -
\Theta \rangle + \frac{\rho}{2} \|\Omega - \Theta\|_{Frob}^2,
\nonumber
\end{equation}
where $\Gamma \in \mathbb{R}^{p \times p}$ is the multiplier of the
linear constraint $\Omega = \Theta$  and $\rho>0$ is the penalty
parameter for the violation of the linear constraint. The ADM solves
the following problems to generate the new iterate:
\begin{eqnarray}
\Omega^{(\tau+1)}&=& \argmin_{\alpha I \preceq \Omega \preceq \beta
I} J(\Omega,
\Theta^{(\tau)}, \Gamma^{(\tau)}), \label{subprob:x_k+1} \\
\Theta^{(\tau+1)} &=& \argmin_{\supp(\Theta) \subseteq F} J(\Omega^{(\tau+1)}, \Theta, \Gamma^{(\tau)}), \label{subprob:y_t+1_tgd}\\
\Gamma^{(\tau+1)} &=& \Gamma^{(\tau)} - \rho(\Omega^{(\tau+1)} -
\Theta^{(\tau+1)}). \nonumber
\end{eqnarray}
Let us first consider the minimization
problem~\eqref{subprob:x_k+1}. It is easy to verify that it is
equivalent to the following minimization problem:
\begin{equation}
\Omega^{(\tau+1)} = \argmin_{\alpha I \preceq \Omega \preceq \beta
I} \frac{1}{2}\left\|\Omega - M\right\|_{Frob}^2 - \frac{1}{\rho}
\log\det \Omega, \nonumber
\end{equation}
where
\[
M = \Theta^{(\tau)} - \frac{1}{\rho}(\Sigma_n - \Gamma^{(\tau)}).
\]
Let the singular value decomposition of $M$ be
\[
M = V \Lambda V^\top, \ \ \text{ with } \Lambda = \diag(\lambda_1,
..., \lambda_n).
\]
It is easy to verify that the solution of
problem~\eqref{subprob:x_k+1} is given by
\[
\Omega^{(\tau+1)} = V \tilde\Lambda V^\top, \ \ \text{ with }
\tilde\Lambda = \diag(\tilde\lambda_1,...,\tilde\lambda_n),
\]
where
\[
\tilde\lambda_j = \min\left\{\beta,\max\left\{\alpha,
\frac{\lambda_j + \sqrt{\lambda_j^2 + 4/\rho}}{2}\right\}\right\}.
\]
Next, we consider the minimization
problem~\eqref{subprob:y_t+1_tgd}. It is straightforward to see that
the solution of problem~\eqref{subprob:y_t+1_tgd} is given by
\[
\Theta^{(\tau+1)} = \left[\Omega^{(\tau+1)} -
1/\rho\Gamma^{(\tau)}\right]_F.
\]

%\newpage

\end{document}